\pgfplotsset{compat = newest}
\newcommand{\N}{Noothigattu et al. \cite{N}~}
\begin{document}
\newtheorem{thm}{Theorem}[section]
\newtheorem{assumption}{Assumption}
\newtheorem{definition}{Definition}[section]
\newtheorem{corollary}[thm]{Corollary}
\newtheorem{lemma}{Lemma}
\newtheorem{thesis}{Thesis}
\newtheorem{observation}{Observation}
\newtheorem{proposition}[thm]{Proposition}

\newtheorem{problem}{Problem}
\newtheorem{conjecture}{Conjecture}
\newtheorem{axiom}{Axiom}
\newtheorem{remark}{Remark}

\newtheorem{example}{Example}[section]
\newtheorem{claim}{Claim}[section]


\title{No Agreement Without Loss:\\
Learning and Social Choice in Peer Review}

\author{Pablo Barcel\'o$^{123}$, Mauricio Duarte$^{4}$, Cristobal Rojas$^{13}$, Tomasz Steifer$^{125}$ }

\date{%
	$^1$Institute for Mathematical and Computational Engineering, PUC, Chile\\%
	$^2$Instituto Milenio Fundamentos de los Datos, Chile\\
	$^3$Centro Nacional de Inteligencia Artificial, Chile\\
	$^4$Universidad Andres Bello, Chile\\
	$^5$Institute of Fundamental Technological Research, Polish Academy of Sciences\\%
 }

\maketitle

\subsubsection*{Abstract}
In peer review systems, reviewers are often asked to evaluate various features of submissions, such as technical quality or novelty. A score is given to each of the predefined features and based on these the reviewer has to provide an overall quantitative recommendation. It may be assumed that each reviewer has her own mapping from the set of features to a recommendation, and that different reviewers have different mappings in mind. This introduces an element of arbitrariness known as commensuration bias. In this paper we discuss a framework, introduced by Noothigattu, Shah and Procaccia, and then applied by the organizers of the AAAI 2022 conference. Noothigattu, Shah and Procaccia proposed to aggregate reviewer's mapping by minimizing certain loss functions, and studied axiomatic properties of this approach, in the sense of social choice theory. We challenge several of the results and assumptions used in their work and report a number of negative results. On the one hand, we study a trade-off between some of the axioms proposed and the ability of the method to properly capture agreements of the majority of reviewers. On the other hand, we show that dropping a certain unrealistic assumption has dramatic effects, including causing the method to be discontinuous.

\section{Introduction}

\paragraph{\underline{Context}.}
Peer review comes with various issues and problems, which might be tackled from formal and computational perspectives, see, e.g., \cite{survey} for a survey. Previous research pointed out to a significant level of arbitrariness both in grant \cite{graves2011funding,erosheva2021zero} and conference peer review \cite{franccois2015arbitrariness}. One of the apparent influences on arbitrariness, called \text{commensuration bias} \cite{lee2015commensuration}, appears when reviewers are asked to provide overall scores on top of a set of incomparable scores given to specific criteria (such as novelty or methodological soundness). However, reviewers have different preferences when evaluating proposals or papers \cite{hug2022peers,shah2018design}. Some reviewers may prioritize novelty, while others may value technical quality more. As a result, the acceptance of a paper that reports well-substantiated but incremental results may depend on the group of reviewers to which the paper is assigned. 

Recently, \N have proposed 
a novel way of looking at this problem by combining approaches from machine learning and social choice theories.  
This framework is defined in terms of a set $\mathcal{R}$ of reviewers and  
a set $\mathcal{P}$ of papers. 
Each reviewer $i\in \mathcal{R}$ reviews each paper $a\in \mathcal{P}$ by providing a {\em score vector}  $\bar{x}_{ia}\in [0,10]^d$ and 
an {\em overall recommendation} $y_{ia}=h_i(\bar{x}_{ia})\in [0,10]$, for $h_i : [0,10]^d \to [0,10]$ a monotonic mapping used by $i$ to map score vectors to 
recommendations. 
One then aims at learning a single monotonic mapping $\hat h : [0,10]^d \to [0,10]$, representing the aggregation of the $h_i$'s, 
by minimizing an $L(p,q)$ loss function. 

Ideally, one would like to choose the parameters $p,q$ based on the data. However, for peer reviewing there is no underlying ground truth to adjust to, so other methods are needed. \N propose taking the perspective of social choice theory, and develop desirable properties, or {\em axioms}, that restrict the choice of parameters $p,q$ that can be used. These axioms are  
{\em consensus} (if all reviewers give the same recommendation to a paper, then the aggregate mapping should give the same recommendation to it), {\em efficiency} (if the set 
of recommendations given to a paper {\em dominates} the set of recommendations given to another paper, then the aggregate mapping should give the first paper a score at least as high as the second one), and {\em strategy-proofness} 
(reviewers should have no incentive to misreport their recommendations).   

The main result of \N is obtained under the following rather strong {\em Objectivity Assumption}:  for each paper $a$, all reviewers coincide in the score vector they assign to $a$. We call the setting obtained under this assumption the {\em hidden scores setting}. In this context, the only choice of parameters that satisfies all the axioms above is $p = q = 1$. 

The learned mapping can be used in practice to generate alternative overall recommendations for each review based on the criteria scores. These additional ratings can then be utilized by decision-makers, such as program committees for conferences or expert panels for funding agencies, to make more informed decisions. In fact, as mentioned in \cite{survey}, the 
organizers of the AAAI 2022 conference used this approach to "identify reviews with significant commensuration bias."


 \paragraph{\underline{Our results}.} 
 We present a body of negative results that show limitations of the discussed method and call into question whether it should be used in practice.

The results of \N suggest the $L(1,1)$ loss function as the best option for practical use in peer review aggregation, at least in the hidden score setting. We challenge this by arguing that $L(1,1)$ might be too crude to correctly differentiate between papers which should be ranked differently according to the expert opinions. We address this issue by proposing a new axiom, {\em consistency}. Can consistent solutions be obtained in the framework of \cite{N}? We show that this is possible for any $p,q\in(1,\infty)$ with $p = q$. We also show that consistency comes with a specific trade-off---a reasonable aggregation method cannot be consistent and strategy-proof at the same time. 


We then move to study the theoretical properties of the setting once the Objectivity Assumption is removed, that is, in the {\em non-hidden scores 
setting}. 
While the original setting does not require this assumption to hold in order to obtain aggregated solutions for papers, we show 
that many of its good axiomatic properties are lost if the Objectivity Assumption is dropped.  
First of all, the axioms as developed by \N no longer hold (even in the case $p = q = 1$).
The reason underlying this fact is that such axioms simply do not take 
score vectors into consideration, and hence they do not convey all the necessary information for expressing interesting properties in the non-hidden setting. 
We show that natural modified versions of consensus and efficiency are satisfied for certain loss functions, but the same holds neither 
for consistency nor for strategy-proofness. In fact, we show that in this general setting the $L(p,q)$ method fails to be strategy-proof in a very dramatic way---a malicious reviewer can obtain a large gain by an arbitrarily small misreporting of his beliefs. In particular, the method loses its stability in the sense that it is not even continuous with respect to the data.  

\paragraph{\underline{Related work}.} 

The idea of using loss functions to aggregate values 
was previously studied in the aggregation function theory 
 under the name of penalty-based functions; see, e.g., \cite{grabisch_marichal_mesiar_pap_2009,calvo2010aggregation,bustince2017definition}. We note that the framework of this theory is different from the one presented in this paper. Moreover, to the best of our knowledge, the $L(p,q)$ loss functions were not studied in this context. The idea of combining learning and aggregation in peer review was also pursued recently by other authors (see \cite{melnikov2016learning})---although with a goal of learning an aggregation function whose recommendations match that of the empirical data, instead of focusing on axiom satisfaction. 

Impossibility results on strategy-proofness in the area of 
social choice date back to the classic Gibbard-Satterthwaite Theorem 
\cite{gibbard1973manipulation, satterthwaite1975strategy}
and many works that came later, including Moulin's characterization of strategy-proofness for single-peaked utilities.
Relaxed versions of strategy-proofness were proposed as means to avoid these impossibility results and even quantify the degree to which strategy-proofness is violated \cite{carroll2011quantitative,birrell2011approximately, azevedo2019strategy, mennle2021partial,li2022differentially}. For instance, such modification may require that the voter has an incentive to report opinions close to his own (as opposed to exactly true opinions in standard strategy-proofness). 

\section{Framework} 
\label{sec:framework} 
We start by presenting more formally the framework of \N 
Consider a set $\mathcal{R}$ of $n$ reviewers and a set $\mathcal{P}$ of $m$ papers\footnote{The word 'paper' can be replaced by other, such as 'proposal', depending on the application.} . Each reviewer $i$ has to evaluate every paper $a$ \footnote{This assumption is somehow unrealistic but this is not an issue here, since our main results are negative.}  by providing:

\begin{enumerate}[wide, labelwidth=!, labelindent=0pt]
    \item a \textit{score} vector $\bar{x}_{ia}\in [0,10]^d\subset \mathbb{R}^d$ associated to each of $d$ different evaluation criteria, and 
    \item a recommendation $y_{ia}\in [0,10] \subset \mathbb{R}$, reflecting the summary evaluation of the paper.
\end{enumerate}

The problem consists of \textit{aggregating} reviewers' opinions into an overall recommendation $s_a \in [0,10]$ that reflects their opinions in an effective and fair manner. 

Recall from the introduction that the hypothesis class $\mathcal{H}$ corresponds to the family of {\em monotonic functions}, i.e., functions $h : [0,10]^d \to [0,10]$ satisfying: 
$$\bar x \leq \bar y \ \ \Longrightarrow \ \ h(\bar x) \leq h(\bar y),$$
where $\bar x \leq \bar y$ is a shortening for stating that the $i$-th component $x[i]$ of $\bar x$ is smaller or equal than the $i$-th component $y[i]$ of $\bar y$, for each $i\in [1,d]$.  It is also assumed that each reviewer $i \in \mathcal R$ has her own monotonic function $h_i : [0,10]^d \to [0,10]$ in mind, which she uses to rate papers, i.e., 
$y_{ia} = h_i(\bar{x}_{ia})$. This means, in particular, that if $\bar{x}_{ia} = \bar{x}_{ib}$, for papers $a,b \in \mathcal P$, then $y_{ia} = y_{ib}$.   

\paragraph{\underline{The $L(p,q)$-aggregation method}.}
The way in which the aggregated score $s_a$ for paper $a \in \mathcal{P}$ is obtained consists of the following two steps. 

\begin{enumerate}[wide, labelwidth=!, labelindent=0pt]
\item {\em Empirical Risk Minimization (ERM).} 
In this first step one finds a {\em monotonic} function that minimizes the {\em $L(p,q)$ loss} on the data, 
for $p,q \in [1,\infty)$.
In particular, one aims at computing the values $\hat{h}(\bar{x}_{ia})$, for $i \in \mathcal{R}$, $a \in \mathcal{P}$, and
$\hat h \in {\rm argmin}_{h \in \mathcal{H}} \, L(p,q)$, where:
\[ \label{eq:lpq}
L(p,q) \ := \ \bigg[\,\sum_{i \in \mathcal{R}} \, \bigg( 
\, \sum_{a \in \mathcal{P}} \,  |y_{ia} - h(\bar{x}_{ia})|^{{p}} \bigg)^{\nicefrac{q}{p}} 
\, \bigg]^{\nicefrac{1}{q}}\, .
\]
That is, the $L(p,q)$ loss is obtained by first taking for each reviewer the $L_p$ norm 
on the error of $h$ with respect to the reviewer’s recommendations, followed by computing 
the $L_q$ norm over these errors.  We emphasize that when in this setting one speaks about finding a ``function'' $\hat{h}$, what is meant is simply finding the finitely many values that $\hat{h}$ takes on the available score vectors. 
 
 Since the solution to this problem is not unique for certain values of $p$ and $q$, \N propose a way to break ties by choosing the $\hat h$ for which the solution vector $\hat{h}(\bar{x}_{ia})_{ia}$ has the smallest $L_2$ norm, and show that this method singles out a unique solution. We denote by $\hat{y}_{ia}=\hat{h}(x_{ia})$ the values resulting from this step. 

Note that when the available data $(\bar{x}_{ia},y_{ia})$, for $i,j \in \mathcal{R}$ and $a,b \in \mathcal{P}$, 
is monotonic {\em across} reviewers---i.e., $y_{ia}\geq y_{jb}$ whenever $\bar{x}_{ia}\geq \bar{x}_{jb}$---then (assuming all score vectors are different) the solution $\hat{h}(\bar{x}_{ia})=y_{ia}$ achieves zero loss. However, monotonicity across reviewers does not follow from the hypothesis of individual reviewer's monotonicity. Thus the main effect of this step is to ``modify'' the recommendations $y_{ia}$ into $\hat{h}(\bar{x}_{ia})$ in order to ensure monotonicity across reviewers.


\medskip 

\item 
{\em  Aggregation.} Second, based on the modified recommendations $\hat{y}_{ia}$ only, for $i \in \mathcal{R}$, 
one computes an aggregated solution $s_a \in [0,10]$ for each $a \in \mathcal{P}$. 
One can do this by following the same approach as in the ERM step, namely, finding the vector of solutions $(s_a)_{a \in \mathcal{P}}$  
that minimizes the $L(p,q)$ loss given by:
\[ \label{eq:lpqs}
L(p,q) \ := \ \bigg[\,\sum_{i \in \mathcal{R}} \, \bigg( 
\, \sum_{a \in \mathcal{P}} \,  |\hat{y}_{ia} - s_a|^{{p}} \bigg)^{\nicefrac{q}{p}} 
\, \bigg]^{\nicefrac{1}{q}}\, .
\]
Note that score vectors $\bar{x}_{ia}$ play no role in this step. 
\end{enumerate} 

\paragraph{\underline{Axioms}.} Unlike many other problems in machine learning,  
there is no directly available ground truth for the peer review process. 
The way in which this problem is approached by \N is by specifying certain socially desirable properties, in the form of axioms, and then studying 
for which combinations of $p$ and $q$ the solution provided by the empirical risk minimization of 
the $L(p,q)$ loss satisfies the axioms. We recall these axioms below. 


The first axiom states that if all reviewers give the same overall 
recommendation to a paper, then 
the aggregated solution should coincide with that value.

\begin{axiom}[{\sc Consensus}]
\label{ax.consensus}
If for a paper $a \in \mathcal P$ we have $y_{ia} = y$, for each $i \in \mathcal R$, then $s_a = y$. 
\end{axiom} 

The second principle requires defining the concept of {\em domination} among papers. We say that paper $a$ {\em dominates} paper $b$, if there is a permutation $\pi : {\mathcal R} \to {\mathcal R}$ of the set of reviewers such that $y_{ia} \geq y_{\pi(i)b}$ for every reviewer $i \in \mathcal R$. 
%
The second axiom states that if $a$ dominates $b$, then the aggregated solution $s_a$ should be at least as high as the aggregated solution $s_b$. 


\begin{axiom}[{\sc Efficiency}]
\label{ax.efficiency}
Whenever paper $a$ dominates paper $b$, the solution satisfies $s_a\geq s_b$. 
\end{axiom}

The third principle introduced by \N is based on the notion of {\em strategy-proofness}, a game-theoretic concept which plays a fundamental role in social choice theory \cite{CSC-book,DBLP:conf/ijcai/Xu0SS19}. 
The application of strategy-proofness in the current setting formalizes the idea that we do not want individual reviewers to be able to manipulate the solution (i.e., to bring the solution closer to their own opinion) by misreporting their true recommendations. 
It is motivated by the discovery of strategic behavior in peer reviewing \cite{fraud,DBLP:conf/aaai/StelmakhSS21,DBLP:journals/corr/abs-2201-10631}. 

\begin{axiom}[{\sc Strategy-proofness}]
\label{ax.strategy-proof}
For each reviewer $i$ and each misreported recommendations vector $(y'_{ia})_{a \in \mathcal{P}}$,
the manipulated solution $(s'_a)_{a \in \mathcal{P}}$ satisfies $$\|(s'_a)_{a \in \mathcal{P}} - (y_{ia})_{a \in \mathcal{P}}\|_2 \ \geq \  \|(s_a)_{a \in \mathcal{P}}-(y_{ia})_{a \in \mathcal{P}}\|_2.$$
\end{axiom}


\subsection{The Objectivity Assumption} 
Theoretical results of \N are obtained under the following strong {\em Objectivity 
Assumption}: 
For each paper $a \in \mathcal P$, all score vectors given by reviewers coincide, i.e., 
$$\bar{x}_{ia} = \bar{x}_{ja}=\bar{x}_a, \ \ \text{for each $i,j \in \mathcal R$.}$$ 
This has interesting implications: since now we have only one score vector per paper, the ERM 
step becomes finding a single value $\hat{y}_a=\hat{h}(\bar{x}_a)$, for each $a \in \mathcal{P}$. But this now makes the Aggregation step of the method trivial, since we can choose $s_a=\hat{y}_a$ for every $a\in \mathcal{P}$ and achieve zero loss in this step. 
As shown by \N, under this strong assumption it is possible to obtain a neat characterization of which $L(p,q)$ solutions satisfy the axioms.  

\begin{thm}[\N] \label{theo:previous} 
Under the Objectivity Assumption, the $L(p,q)$-aggregation method satisfies 
{\sc Consensus}, {\sc Efficiency}, and 
{\sc Strategy-proofness} iff $p = q = 1$. 
In particular: 
\begin{itemize}
\item When $p \in (1,\infty)$ and $q = 1$, {\sc Efficiency} is violated.
\item When $q \in (1,\infty)$, {\sc Strategy-proofness} is violated.  
\end{itemize} 
\end{thm}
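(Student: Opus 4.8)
The plan is to exploit the collapse of the two-step method under the Objectivity Assumption: since each paper $a$ carries a single score vector $\bar x_a$, the ERM step already returns one value $\hat y_a=\hat h(\bar x_a)$ per paper and the Aggregation step is trivial, so the whole method reduces to computing the vector $(s_a)_{a\in\mathcal P}$ that minimizes the $L(p,q)$ loss over monotonic assignments, ties broken by minimal $\|\cdot\|_2$ norm. I would prove the ``iff'' through the two ``in particular'' clauses: if $q>1$ then \textsc{Strategy-proofness} fails, and if $q=1$ but $p>1$ then \textsc{Efficiency} fails; hence outside $p=q=1$ some axiom breaks, while $p=q=1$ satisfies all three. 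For the two negative results it suffices to produce a single instance, so I would place the papers' score vectors on an antichain: then monotonicity of $\hat h$ imposes no constraint between the $s_a$, and any profile of recommendations is realizable by monotonic $h_i$.

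For $p=q=1$ I first identify the solution. The loss $L(1,1)=\sum_a\big(\sum_i|y_{ia}-s_a|\big)$ decouples across papers, so absent constraints each $s_a$ is a median of the column $(y_{ia})_i$ and the $\|\cdot\|_2$ tie-break selects the lower median. Monotonicity needs one observation: if $\bar x_a\le\bar x_b$ then $y_{ia}=h_i(\bar x_a)\le h_i(\bar x_b)=y_{ib}$ for every $i$, so column $a$ lies entrywise below column $b$; hence their order statistics, and in particular their lower medians, are ordered, the unconstrained per-paper medians are already feasible, and they are therefore optimal. \textsc{Consensus} is then immediate, and \textsc{Efficiency} follows because domination makes the sorted column of $a$ dominate that of $b$ entrywise, whence $s_a\ge s_b$. \textsc{Strategy-proofness} I would reduce to the classical robustness of the median; its interaction with the monotonicity constraint under a misreport is the delicate point, discussed at the end.

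For $q=1$, $p>1$ I give an explicit efficiency violation with two reviewers and two papers, taking recommendation vectors $\bar y_1=(10,0)$ and $\bar y_2=(2,3)$. The decisive structural point is that $L(p,1)=\|s-\bar y_1\|_p+\|s-\bar y_2\|_p$, and since the $\ell_p$ norm is strictly convex for $p>1$, its set of minimizers is exactly the segment $[\bar y_1,\bar y_2]$ --- not the axis-parallel box that arises at $p=1$. The tie-break then selects the point of this segment nearest the origin, namely the endpoint $(2,3)$, so $s_1=2<s_2=3$; yet the sorted columns $(10,2)$ and $(3,0)$ show that paper $1$ dominates paper $2$, violating \textsc{Efficiency}. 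I would stress the contrast with $p=1$, where the minimizer set is the box $[2,10]\times[0,3]$ and the tie-break returns $(2,0)$, respecting \textsc{Efficiency}: this is precisely why strict convexity is the crux.

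For $q>1$ and any $p$ I violate \textsc{Strategy-proofness} with one paper and two reviewers whose true recommendations are $3$ and $9$. With a single paper the inner exponent cancels, $L(p,q)=\big(\sum_i|y_{i1}-s_1|^q\big)^{1/q}$, so $s_1$ is the $\ell_q$-center of the two values; for any $q>1$ the first-order condition $(s_1-3)^{q-1}=(9-s_1)^{q-1}$ forces the midpoint $s_1=6$. If the first reviewer instead reports $0$, the center moves to the midpoint $4.5$ of $\{0,9\}$, and $|4.5-3|<|6-3|$, so she has strictly pulled the outcome toward her true opinion. I expect the main obstacle to lie not in these counterexamples but in the $p=q=1$ case: making the median characterization and its strategy-proofness fully rigorous in the presence of even-sized columns, the $\|\cdot\|_2$ tie-break, and especially the monotonicity constraint under a misreport --- where the per-paper medians of the manipulated data may cease to be feasible and the true solution becomes an isotonic $L_1$ regression. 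Establishing that this isotonic median map admits no profitable single-reviewer deviation is the technical heart, whereas both negative results hinge on the single clean fact that strict convexity (in $p$ for \textsc{Efficiency}, and of the $\ell_q$-center in $q$ for \textsc{Strategy-proofness}) destroys the ordinal robustness enjoyed by the $L(1,1)$ median.
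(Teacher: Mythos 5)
The paper does not actually prove this statement: Theorem~\ref{theo:previous} is imported verbatim from Noothigattu et al.\ \cite{N}, and the closest the paper comes to engaging with it is Observation~\ref{hidden} (the reduction to the hidden-scores setting) and the remark that the $L(1,1)$ solution is the per-paper left-median. So your proposal has to be judged on its own merits rather than against an in-paper argument. On that basis, your two counterexamples are correct and complete. For $q=1$, $p>1$: the minimizer set of $\|s-\bar y_1\|_p+\|s-\bar y_2\|_p$ is exactly the segment $[\bar y_1,\bar y_2]$ by strictness of Minkowski's inequality, the $L_2$ tie-break on the segment from $(10,0)$ to $(2,3)$ does select $(2,3)$ (the unconstrained projection parameter exceeds $1$), and paper $1$ dominates paper $2$ after sorting, so $s_1=2<3=s_2$ violates {\sc Efficiency}; placing the two score vectors on an antichain (which requires $d\geq 2$) makes the profile realizable by monotonic reviewers. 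For $q>1$: the single-paper example with recommendations $3$ and $9$ correctly collapses the loss to the $\ell_q$-center, the first-order condition gives the midpoint, and the misreport $3\mapsto 0$ strictly improves reviewer $1$'s $\ell_2$ objective. These two cases exhaust $(p,q)\neq(1,1)$, so the ``only if'' direction and both bullet points are done.

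The genuine gap is the ``if'' direction, and you have named it yourself without closing it: you never establish that $L(1,1)$ satisfies {\sc Strategy-proofness}. {\sc Consensus} and {\sc Efficiency} for the left-median are fine as you argue them (entrywise-ordered columns have ordered order statistics). But the strategy-proofness claim cannot simply be ``reduced to the classical robustness of the median'': a misreporting reviewer need not report values consistent with any monotonic mapping, so after the misreport the per-paper lower medians may fail the monotonicity constraint of the ERM step and the solution becomes a constrained (isotonic) $L_1$ regression with an $L_2$ tie-break. Showing that this constrained map still admits no profitable single-reviewer deviation is precisely the nontrivial content of the positive half of the theorem in \cite{N}, and your proposal ends where that argument would have to begin. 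As written, you have proved both ``in particular'' bullets and half of the ``iff''; the other half is asserted, not proved.
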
 


\vspace{-0.3cm}

\section{The Hidden Scores setting}
\label{sec:main-results} 

As mentioned in the introduction, one of our motivations is to understand how these axioms can be calibrated in order to produce 
more robust and flexible results in the learning process. In this section 
we will be analyzing the results of \N, and thus for the time being we work under the Objectivity Assumption. 

We start with a simple yet important observation regarding this setting, which will guide all our subsequent discussions. The only role played by score vectors in the aggregation is for defining monotonicity conditions in the ERM step of the aggregation. This means that the minimizer of the $L(p,q)$ norm taken among all functions $h$ may be different from the minimizer taken from the class of monotonic functions. However, this does not happen under the Objectivity Assumption as we assume that reviewers use monotonic mappings. This motivates the following observation.


\begin{observation}\label{hidden}
Under the Objectivity Assumption, the output of $L(p, q)$ aggregation method is independent of score vectors. 
\end{observation}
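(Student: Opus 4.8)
The plan is to show that, under the Objectivity Assumption, the constrained ERM step produces exactly the same values it would produce if the monotonicity constraint were dropped; since the unconstrained problem sees only the recommendations $(y_{ia})$ and never the score vectors, and since (as already noted) the aggregation step is trivial with $s_a = \hat{y}_a$, independence from the score vectors follows immediately.

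First I would rewrite the ERM step in the present setting. Because all reviewers share a single vector $\bar{x}_a$ per paper, the function $\hat{h}$ reduces to a single value $\hat{y}_a := \hat{h}(\bar{x}_a)$ per paper, and the only trace left by the score vectors is the family of monotonicity constraints $\hat{y}_a \le \hat{y}_b$ imposed whenever $\bar{x}_a \le \bar{x}_b$. I would then record the crucial consequence of the reviewers using monotonic maps on a common score vector: $\bar{x}_a \le \bar{x}_b$ forces $y_{ia} = h_i(\bar{x}_a) \le h_i(\bar{x}_b) = y_{ib}$ for every $i \in \mathcal{R}$, so every monotonicity constraint is aligned with the coordinatewise order of the data.

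The heart of the argument---and the step I expect to be the main obstacle---is to prove that the minimizer selected by the method already respects all these constraints, so they are never binding. I would argue by a rearrangement/swap inequality: if a candidate vector $(\hat{y}_a)$ had $\hat{y}_a > \hat{y}_b$ for a pair with $y_{ia} \le y_{ib}$ for all $i$, then exchanging the two coordinates weakly decreases each reviewer's inner sum $\sum_a |y_{ia} - \hat{y}_a|^p$. This reduces to showing that $\psi(t) = |t - \hat{y}_b|^p - |t - \hat{y}_a|^p$ is nondecreasing for $\hat{y}_b \le \hat{y}_a$ and $p \ge 1$, a one-variable monotonicity check; applying the nondecreasing outer map $t \mapsto t^{q/p}$ then shows the swap does not increase the overall $L(p,q)$ loss.

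The delicate point is that for $p = 1$ or $q = 1$ the swap may only preserve, not strictly decrease, the loss, so a bare minimality argument is insufficient. Here I would invoke the tie-breaking rule: the swapped vector is a permutation of the original and hence has the same $L_2$ norm, so if both were loss-minimizers their midpoint would lie in the (convex) set of minimizers while having strictly smaller $L_2$ norm, by strict convexity of $\|\cdot\|_2$---contradicting the choice of the minimum-$L_2$-norm solution. Thus no mis-ordered pair can survive, the selected solution is feasible for the constrained problem, and so the constrained and unconstrained minimizers coincide. Since the latter is a function of the recommendations alone, the output $(s_a)$ is independent of the score vectors, as claimed.
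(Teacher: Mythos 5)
Your argument is correct and follows the same route the paper takes: the paper justifies Observation~3.1 only by the informal remark preceding it (score vectors enter only through the monotonicity constraints of the ERM step, and under the Objectivity Assumption these constraints are non-binding because the reviewers' mappings are monotone), which is exactly the claim you prove. Your rearrangement/swap inequality together with the $L_2$ tie-breaking argument supplies the details that the paper leaves implicit, so this is a faithful, more rigorous version of the same proof rather than a different one.
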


This means that, in the Objective Setting the score vectors can be ignored. We therefore call this the {\em hidden score setting}.

\subsection{Consistency}

Recall from Theorem \ref{theo:previous} that, under the Objectivity Assumption, the only solution method that satisfies all three axioms is $L(1,1)$. As shown by \N, in such case we have that for each paper $a$ the solution $s_a$ is the left-median of the values $y_{ia}$, for $i \in \mathcal{R}$. The problem with this solution is, of course, that it might be insensitive to situations in which 
most reviewers agree that one of the papers is largely better than some other paper. 
To illustrate this, consider the following recommendation vectors:

\begin{itemize}
\item two papers with four reviewers each and  recommendations given by $[8,\, 8,\, 3,\, 8]$ and $[8,\, 8,\, 9, \,8]$;
\item two papers with three reviewers each and recommendations given by $[3,\, 1,\, 4]$ and $[2, 3, 9]$.
\end{itemize}

In both examples, despite a clear preference for one of the papers, the left-median solution would assign them the same score: $s=8$ in the first example and $s=3$ in the second. We believe that a strong case can be made that, in these examples, it would not seem very consistent to be unable to discriminate between papers based on their aggregated score. In order to address this issue we propose the following axiom, which strengthens {\sc Efficiency}. Recall that paper $a$ {\em dominates} paper $b$, if there is a permutation $\pi : {\mathcal R} \to {\mathcal R}$ of the set of reviewers such that $y_{ia} \geq y_{\pi(i)b}$ for every reviewer $i \in \mathcal R$. Similarly, we say that paper $a$ {\em strictly dominates} paper $b$, if at least one of these inequalities is strict.


\begin{axiom}[{\sc Consistency}] Axiom {\sc Efficiency} holds 
and whenever paper $a$ strictly dominates paper $b$, then $s_a>s_b$.
\end{axiom}
\smallskip 

Note that $L(1,1)$ solutions are efficient but not consistent. 
We observe, in contrast, that $L(p,q)$ solutions always satisfy {\sc Consensus} and {\sc Consistency} when $p=q$, for any $p > 1$. 


\begin{thm}\label{the-consistency}
For all $p,q \in (1,\infty)$, the solutions based on the $L(p,q)$-aggregation method satisfy {\sc Consensus}, and for $p=q > 1$, they satisfy {\sc Consistency}.\end{thm}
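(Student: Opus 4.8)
Under the Objectivity Assumption, Observation~\ref{hidden} tells us that the aggregation is independent of the score vectors, so we may work directly with the recommendation values. Moreover, as noted in the discussion of the Objectivity Assumption, the ERM and Aggregation steps collapse: for each paper $a$ the solution $s_a$ is simply the minimizer of the per-paper contribution to the $L(p,q)$ loss. Since $p=q$ is the case of interest for {\sc Consistency}, I would first record the crucial simplification that when $p=q$ the loss decouples completely across papers:
\[
L(p,p)^p \ = \ \sum_{a \in \mathcal P} \sum_{i \in \mathcal R} |y_{ia} - s_a|^p,
\]
so that each $s_a$ is chosen independently to minimize $f_a(s) := \sum_{i} |y_{ia} - s|^p$. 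The plan is to analyze this single-variable convex optimization and extract the two monotonicity properties we need.

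For {\sc Consensus}, the verification is immediate and I would dispatch it first for all $p,q \in (1,\infty)$: if every reviewer reports $y_{ia}=y$, then $f_a(s)$ (or, in the general $p \neq q$ case, the corresponding single-paper objective) is uniquely minimized at $s=y$, giving $s_a=y$. The heart of the theorem is {\sc Consistency}, which I would establish for $p=p=q>1$ by exploiting strict convexity. The key observation is that for $p>1$ the function $t \mapsto |t|^p$ is strictly convex and differentiable, so $f_a$ is strictly convex, hence has a unique minimizer $s_a$ characterized by the first-order condition $\sum_i \sg(y_{ia}-s_a)\,|y_{ia}-s_a|^{p-1} = 0$. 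I would then argue monotonicity of this minimizer in the data vector: if paper $a$ dominates paper $b$, there is a permutation $\pi$ with $y_{ia} \ge y_{\pi(i)b}$ coordinatewise, and I would show $s_a \ge s_b$ by a comparison argument on the first-order conditions (the minimizer is a nondecreasing function of each coordinate). For strict domination, where at least one inequality $y_{ia} > y_{\pi(i)b}$ is strict, I would show the minimizer must move strictly, yielding $s_a > s_b$.

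The main obstacle, and the step deserving the most care, is precisely this strictness claim: converting strict domination into a \emph{strict} inequality $s_a > s_b$. The cleanest route is to define, for $\lambda \in [0,1]$, the interpolated data $y^\lambda_i := (1-\lambda)\,y_{\pi(i)b} + \lambda\,y_{ia}$ and let $s(\lambda)$ be the corresponding minimizer; by strict convexity and the implicit function theorem $s(\lambda)$ is differentiable, and one computes that $\frac{d}{d\lambda} s(\lambda) \ge 0$ with the derivative being strictly positive whenever some coordinate is strictly increasing and not all data values coincide at the optimum. Integrating from $\lambda=0$ (giving $s_b$) to $\lambda=1$ (giving $s_a$) then yields $s_a > s_b$. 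A subtlety to handle is the degenerate case where all active comparisons could cancel; I would argue that because $p>1$ makes the influence function $t \mapsto \sg(t)|t|^{p-1}$ strictly increasing (rather than the flat sign function arising at $p=1$), no such cancellation can pin $s_a$ and $s_b$ together once strict domination holds. This is exactly where the hypothesis $p>1$ is used, and it is why $L(1,1)$ fails {\sc Consistency} while every $p=q>1$ succeeds.
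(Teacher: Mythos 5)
Your proposal follows the paper's proof in its essential structure: the decoupling $L(p,p)^p=\sum_a\sum_i|y_{ia}-s_a|^p$, the reduction to the single-variable problem $\min_s\sum_i|y_{ia}-s|^p$, and the use of the first-order condition together with the strict monotonicity of the influence function $t\mapsto\sg(t)|t|^{p-1}$ are exactly the paper's Lemma~\ref{lemma-equality}. Where you diverge is the strictness step. The paper simply evaluates $E'_{y'}$ at the old minimizer $s^*(y)$: raising some coordinates of $y$ strictly decreases the left-hand sum of the balance equation or strictly increases the right-hand one, so $E'_{y'}(s^*(y))<0=E'_{y'}(s^*(y'))$, and strict monotonicity of $E'_{y'}$ forces $s^*(y')>s^*(y)$. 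Your homotopy argument ($y^\lambda$ interpolating between the two data vectors, with $s(\lambda)$ tracked by the implicit function theorem) would reach the same conclusion but is more delicate than you acknowledge: for $1<p<2$ the second derivative $p(p-1)\sum_i|y_i^\lambda-s|^{p-2}$ blows up whenever $s(\lambda)$ crosses a data value, so $E'$ is not differentiable there and the IFT does not apply in its standard form; and for $p>2$ the weight $\phi'(y_i^\lambda-s)=(p-1)|y_i^\lambda-s|^{p-2}$ vanishes at a coincidence $y_i^\lambda=s(\lambda)$, so your claim that the derivative is strictly positive whenever some coordinate strictly increases needs an extra argument (e.g., that such coincidences occur only on a negligible set of $\lambda$). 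These are repairable, but the paper's one-line comparison of derivatives at a single point avoids them entirely, and I would recommend adopting it. Your treatment of {\sc Consensus} for general $p\neq q$ is also slightly too quick --- the loss does not decouple across papers there --- but the intended argument (setting $s_b=y$ minimizes every reviewer's inner sum simultaneously, hence the whole nested norm, and is compatible with the monotonicity constraint under the Objectivity Assumption) is the correct one and matches the paper.
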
 

\begin{proof}

We first show Consensus. Assume then that $y_{ib} = y$ for some paper $b\in\mathcal{P}$ and all reviewers $i\in\mathcal{R}$. By looking at the formula for the loss function (note that $p,q\neq \infty$): 
\[
L(p,q) \ := \ \bigg[\,\sum_{i \in \mathcal{R}} \, \bigg( 
\, \sum_{a \in \mathcal{P}} \,  |y_{ia} - h(\bar{x}_{ia})|^{{p}} \bigg)^{\nicefrac{q}{p}} 
\, \bigg]^{\nicefrac{1}{q}}\,,
\]
one observes that, by the objectivity assumption, it holds that $|y_{ib}-h(x_{ib})|=|y-h(x_{1b})|$ for all $i$. In order to minimize the loss function, one must choose $\hat y_{ib}=y$ and $s_b=y$, which makes all such terms equal to zero, while respecting monotonicity. 

To show consistency, first note that minimizing $L(p,q)$ is equivalent to minimizing $L(p,q)^q$ for any $q\in [1,\infty)$, so we will focus on the latter. For the particular case when $p=q$ we have:
\[
L(p,p)^p = \sum_{i\in\mathcal{R}}\sum_{a\in\mathcal{P}}  |y_{ia}-s_a|^p= \sum_{a\in\mathcal{P}} \sum_{i\in\mathcal{R}} |y_{ia}-s_a|^p.
\]

Thus, when $p=q$,  we can solve the problem separately for each paper, and the solution for a given paper $a$ is given by
$$
s_a=\text{arg}\min_s \sum_{i\in \mathcal{R}}|y_{ia}-s|^p.
$$

The proof of consistency will rely on the following simple lemma. Given a vector $y\in\mathbb{R}^n$, let 
\[E_y(s)=\sum^{n}_{i=1}|y_i-s|^p \quad \text{ and } \quad  s^*(y)=\text{arg}\min_s E_y(s).\]

 \begin{lemma}\label{lemma-equality}
 For $y\in \mathbb{R}^n$, and $p>1$, the function $E_y$ is differentiable, and $E'_y$ is strictly increasing. In particular, 
 \begin{equation}\label{lemma}
 \sum_{y_i\leq s^*(y)}(s^*(y)-y_i)^{p-1}=\sum_{s^*(y)\leq y_j} (y_j-s^*(y))^{p-1}.
 \end{equation} 
 \end{lemma}
 
\begin{proof}
Since $p>1$, $E_y$ is differentiable and at its minimum we have $E'_y(s^*)=0$. Differentiating $E_y(s)$ with respect to $s$ we get
 \begin{eqnarray*} E'_y(s)  = &\displaystyle{\sum_{i=1}^n-p|y_i-s|^{p-1}{\rm sign}(y_i-s)} \\  = &\displaystyle{ p\left(\sum_{y_i \leq s}(s-y_i)^{p-1} - \sum_{s\leq y_j}(y_j-s)^{p-1}\right)}.
 \end{eqnarray*} 
 $E'_y(s)$ is therefore a strictly increasing function of $s$ passing through 0 at $s=s^*(y)$. The equality in the Lemma immediately follows. 
 \end{proof}

We can now show consistency (which includes efficiency). Let $y',y\in\mathbb{R}^n$ be the recommendation vectors for two different papers such that $y'$ dominates $y$. If the domination is not strict, then the recommendations are actually the same (after permutation), and in this case their solutions will also be the same (since both minimize the same expression), so the condition for efficiency is satisfied. We now show that whenever $y'$ strictly dominates $y$, we have that $s^*(y')> s^*(y)$. After permutation, we can assume that $y'$ equals $y$ with some coordinates strictly increased. It follows that either
\[
\sum_{y'_i \leq s^*(y)}(s^*(y)-y'_i)^{p-1}<\sum_{y_i\leq s^*(y)}(s^*(y)-y_i)^{p-1}
\]
or
\[
\sum_{s^*(y)\leq y_j'}(y'_j-s^*(y))^{p-1}>\sum_{s^*(y)\leq y_j}(y_j-s^*(y))^{p-1}
\]
or both. Subtracting these inequalities and using the computation in Lemma \ref{lemma}, we obtain  that $E_{y}'(s^*(y))=0 > E_{y'}'(s^*(y))$, which in turn shows that $s^*(y')>s^*(y)$, since $E'_{y'}$ is strictly increasing. This finishes the proof.
\end{proof}




%




We now turn to the problem of strategy-proofness. Let us first recall that, by the results of \N,  $L(p,q)$-aggregation is not strategy-proof when $q\in(1,\infty)$. As shown in the next result, this is a consequence of a more general conflict between consistency and strategy-proofness. The following result applies generally to aggregation methods of the form $s: (y_{ia})_{ia} \mapsto (s_a)_a$ (of which $L(p,q)$ aggregation is a special case). 


\begin{proposition}\label{prop.consistency-notsproof}
If an aggregation method is continuous and satisfies {\sc\ Consensus} and {\sc\  Consistency}, then it does not satisfy 
{\sc Strategy-proofness}.
\end{proposition}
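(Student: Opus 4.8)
The plan is to argue by contradiction: assume the method is continuous and satisfies {\sc Consensus}, {\sc Consistency} and {\sc Strategy-proofness}, and derive a contradiction. First I would isolate a single manipulator. Fix the reports of all reviewers except reviewer $i$, and regard the map $F:(y_{ia})_{a}\mapsto (s_a)_a$ sending reviewer $i$'s report vector to the solution vector as a continuous map $F:[0,10]^{m}\to[0,10]^{m}$. In this language {\sc Strategy-proofness} says exactly that, for every true opinion $\mathbf r$, the truthful output $F(\mathbf r)$ is an $\ell_2$-nearest point to $\mathbf r$ among all achievable outputs $\{F(\mathbf r'):\mathbf r'\in[0,10]^m\}$.

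The crucial first step is to extract monotonicity from {\sc Strategy-proofness} alone. Writing the {\sc Strategy-proofness} inequality once with true opinion $\mathbf r$ and deviation $\mathbf r'$, and once with the roles of $\mathbf r$ and $\mathbf r'$ exchanged, and adding the two, the $\|\mathbf r\|^2$ and $\|\mathbf r'\|^2$ terms cancel and one is left with $\langle F(\mathbf r)-F(\mathbf r'),\,\mathbf r-\mathbf r'\rangle\ge 0$ for all $\mathbf r,\mathbf r'$. Taking $\mathbf r'=\mathbf r+\delta e_a$ for $\delta>0$ then gives that the score $s_a$ is nondecreasing in reviewer $i$'s own report on paper $a$. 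Thus, even though $F$ may couple the papers, {\sc Strategy-proofness} forces each own-coordinate response to be monotone.

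Next I would pin down this response using the other two axioms, working with one ``active'' paper $a$ while keeping all remaining papers consensus-locked (reviewer $i$ reports on them the value on which all reviewers already agree), so that by {\sc Consensus} their scores are frozen and the $\ell_2$ objective reduces to the single coordinate $s_a$. Fix the other reviewers' score on $a$ at some value $x$ and add a consensus anchor paper on which all reviewers report reviewer $i$'s target value $v>x$; by {\sc Consensus} that anchor gets score exactly $v$, and since its recommendation vector strictly dominates that of $a$, {\sc Consistency} forces $s_a<v$ when reviewer $i$ reports $v$ truthfully. Hence truthful reporting undershoots the ideal. Using the monotonicity from the previous step together with continuity and an intermediate-value argument, reviewer $i$ can over-report on $a$ so as to push $s_a$ strictly closer to $v$ (indeed all the way to $v$ once $v$ lies in the attainable range), strictly decreasing the $\ell_2$ distance and contradicting {\sc Strategy-proofness}.

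The main obstacle is precisely the coupling between papers: for an arbitrary aggregation method one cannot a priori analyse a single paper's score in isolation, and one must also exclude the degenerate possibility that the own-response is flat (unresponsive) over the relevant range --- exactly the phenomenon that lets the left-median $L(1,1)$ rule be strategy-proof. I expect to resolve this with three devices: locking the auxiliary papers at consensus values so that the objective genuinely decouples onto the active coordinate; deriving own-coordinate monotonicity from {\sc Strategy-proofness} itself (Step 2); and invoking the strict inequality in {\sc Consistency} between two papers that differ only in reviewer $i$'s report within a single profile, which rules out any flat stretch of the response and thereby guarantees that the exaggeration strictly moves the score. Assembling these, the undershoot forced by {\sc Consistency} can always be strictly reduced, which is the desired contradiction.
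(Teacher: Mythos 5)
Your first two steps are sound: the cyclic-monotonicity trick (writing {\sc Strategy-proofness} for $(\mathbf r,\mathbf r')$ and $(\mathbf r',\mathbf r)$ and adding) does yield $\langle F(\mathbf r)-F(\mathbf r'),\mathbf r-\mathbf r'\rangle\ge 0$, hence a nondecreasing own-coordinate response, and locking the auxiliary papers at consensus does reduce the $\ell_2$ objective to the single coordinate $s_a$. The gap is in the last step. Let $g(t)$ denote $s_a$ when reviewer $i$ reports $t$ on the active paper. Your anchor at $v$ gives $g(t)<v$ for all $t\le v$ (the anchor dominates $a$ there), and monotonicity gives $g(t)\ge g(v)$ for $t\ge v$; but nothing you have derived prevents $g$ from being \emph{constant} equal to $g(v)$ on all of $[v,10]$, in which case no deviation improves reviewer $i$'s distance and no contradiction arises. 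Your proposed fix --- the strict inequality in {\sc Consistency} applied to two papers that differ only in reviewer $i$'s report within a single profile --- does not close this: it yields $s_{a'}>s_a$ for two \emph{different} papers in \emph{one} profile, and since the proposition concerns an arbitrary map $(y_{ia})_{ia}\mapsto(s_a)_a$ with no neutrality or anonymity assumption, this says nothing about how the score of the \emph{same} paper responds when reviewer $i$'s report on it changes across profiles. Indeed, for $t>v$ the vector $(t,x,\dots,x)$ and the anchor $(v,\dots,v)$ are incomparable, so {\sc Consistency} imposes no constraint at all on $g$ in that range. The flat case is exactly the loophole that makes the left-median strategy-proof, and your configuration does not exclude it.

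The paper's proof avoids this obstacle by reversing the roles of the bound and the target. It takes two reviewers and two papers, sets $D_t=\{y_{1a}=4,\ y_{1b}=y_{2b}=4+2t,\ y_{2a}=6\}$, and uses {\sc Consensus}, {\sc Consistency} and the intermediate value theorem on $s^t_a-s^t_b$ to find $u$ with $s^u_a=s^u_b=4+2u$: continuity is used to \emph{locate a profile whose output hits a prescribed value exactly}. It then declares reviewer $1$'s true opinion on paper $a$ to be precisely that value $4+2u$. In the truthful profile, {\sc Consistency} forces $s_a>4+2u$ (paper $a$ strictly dominates the consensus paper $b$), so the truthful distance is positive; under-reporting $y'_{1a}=4$ reproduces $D_u$ and achieves distance zero. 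So the manipulation is guaranteed to succeed by construction, with no need to prove that the own-response is non-flat. To repair your argument you would need an analogous device: rather than fixing the manipulator's opinion and arguing the response must move, first use continuity to pin down a deviation whose outcome is known exactly, and then choose the manipulator's true opinion to coincide with it.
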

\begin{proof}

\begin{figure}
  \centering
\begin{tikzpicture}  
        \begin{axis}[
        xtick={0.5,1.5,3.5},
    xticklabels={$y'_{1a}$,$s'_a$,$y'_{2a}$},
        xmin=0, xmax=4,      
        axis x line=bottom,
    hide y axis,    
    ymin=0,ymax=5,
        xlabel={recommendations},
    width = 0.35\textwidth,
    height = 0.3\textwidth, 
    legend pos=north west,]]
        
\addplot[
    only marks,
    color=blue,
    mark=*,
    mark size=1.4pt,
    ] coordinates {(1.5,2)(2.2,2)(3.5,2)(0.5,0.7)(1.5,0.7)(3.5,0.7)};

\node[above,black] at (1.5,3) {$y_{1b}=y_{1a}=y_{2b}$};
\node[above,black] at (1.5,2.15) {$\downarrow$};
\node[above,black] at (2.2,2) {$s_{a}$};

\node[above,black] at (3.5,2) {$y_{2a}$};
    \end{axis}
\end{tikzpicture}
\caption{The example in the proof of Proposition 3.2.}
\label{fig:nonsp}
\end{figure}
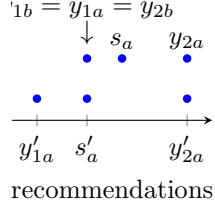

We consider two papers $p = a,b$ and two reviewers $i = 1,2$. We will construct a data set, which allows for manipulation, contradicting {\sc strategy-proofness}. For $t
\in [0,1]$, consider data sets $$D_t \ = \{y_{1a}=4,y_{1b}=4+2t,y_{2b}=4+2t,\ y_{2a}=6\}.$$ 
The solution $s^{t}$ for data set $D_t$ is continuous as a function of $t$ by assumption. For $t=0$, by consensus and consistency we have $s^t_b = 4< s^t_a$. Similarly, for $t=1$ we have $s^t_b =6 > s^t_a$. By continuity, there is $u\in (0,1)$ such that $s^{u}_a = s^{u}_b = 4+2u$. Next, consider the following data set: $y_{1a}=y_{1b}=y_{2b}=s^{u}_a,y_{2a}=6$, and a maliciously modified recommendation $y'_{1a}=4$. Note that the malicious data set corresponds to $D_u$. It follows by consensus and consistency that the respective solutions $s$ and $s'$ to these data sets satisfy:
$$
y_{2a} \ge s_a >  s'_a = s^u_a = y_{1a} > y'_{1a}.
$$
Hence $|y_{1a}-s_a| > |y_{1a}-s'_a|=0$, which contradicts {\sc Strategy-proofness} for reviewer 1, which completes the proof.
\end{proof}

\section{The Non-Hidden Scores setting}
\label{sec:understanding}

We now discuss the implications of dropping the Objectivity Assumption. We call this the {\em non-hidden scores setting}. As suggested by \N, their framework does not require this assumption to hold, only their axiomatic characterization does (see Theorem \ref{theo:previous}). In this section we study the status of these axioms in the non-hidden scores setting. 

\subsection{Failure of the axioms}
 We start with a simple yet important observation: without any modification, none of the axioms is satisfied, regardless of the values for $p$ and $q$. For {\sc Consensus} and {\sc Efficiency} (and hence {\sc Consistency}), the main reason for this failure is that, as stated, the axioms are defined in terms of recommendations alone---score vectors are not taken into account. 

\begin{thm}
\label{prop.consistency0}
For all $p,q \in [1,\infty)$, the $L(p,q)$-aggregation method in the non-hidden scores setting satisfies neither {\sc Consensus} nor {\sc Efficiency} nor {\sc Strategy-Proofness}.  
\end{thm}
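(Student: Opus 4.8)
The plan is to attack each axiom separately with an explicit finite instance in which the ERM step's across‑reviewer monotonization---which \emph{does} see the score vectors---overrides what the recommendation‑only axioms demand. A useful first reduction is that the non‑hidden setting strictly generalizes the hidden one, the Objectivity Assumption being merely a restriction on the admissible inputs; hence every failure recorded in Theorem \ref{theo:previous} transfers verbatim. In particular Efficiency already fails for all $p\in(1,\infty),\,q=1$, and Strategy‑proofness for all $q\in(1,\infty)$. It then suffices to produce genuine non‑hidden counterexamples for the regimes where the axiom still holds under Objectivity---chiefly the diagonal $p=q$, where Theorem \ref{the-consistency} gives Consensus/Consistency, and the boundary point $p=q=1$.

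For Consensus I would give a single instance, already with $d=1$, that works for all $p,q$ at once. Take two reviewers and two papers, let reviewer $1$'s review of $b$ and reviewer $2$'s review of $a$ both carry score $x=5$ but recommendations $2$ and $8$, and place reviewer $1$'s review of $a$ at $x=7$ and reviewer $2$'s review of $b$ at $x=10$, each with recommendation $8$. Consensus on $a$ asks for $s_a=8$. In the ERM, however, the single value $\hat h(5)$ must serve both targets $2$ and $8$, so (after the monotonicity slack lets the remaining coordinates sit at $8$) it is a convex compromise strictly below $8$; thus $\hat y_{2a}=\hat h(5)<8$, the aggregation of $\{8,\hat y_{2a}\}$ falls below $8$, and $s_a\neq 8$. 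The only delicate point is the $q=1$ tie‑break, where I must verify the min‑$L_2$ selection still leaves $\hat h(5)<8$.

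Efficiency is the crux. In dimension $d=1$ the scores are totally ordered and the isotonic fit appears to respect the recommendation order, so a strict reversal seems to require $d\ge 2$, where incomparable score vectors remove monotonicity constraints. My construction uses two reviewers, two papers and $d=2$, with recommendations $y_{1a}=9,\,y_{2a}=1$ and $y_{1b}=8,\,y_{2b}=0$, so that $a$ strictly dominates $b$. I would take score vectors $x_{1a}=(1,1)$, $x_{2b}=(2,2)$, $x_{2a}=(10,0)$, $x_{1b}=(0,10)$, arranged so that the \emph{only} comparable pair is $x_{1a}\le x_{2b}$---a link between paper $a$'s highest review and paper $b$'s lowest. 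The ERM is then forced to pool these into the common value $4.5$ (the symmetric minimizer, for every $p$), a value transfer that drops $a$'s $9$ and lifts $b$'s $0$, while $a$ keeps its low review $1$ and $b$ keeps its high review $8$. The fitted multiset $\{1,4.5\}$ of $a$ is strictly dominated by $\{4.5,8\}$ of $b$, so $s_a<s_b$ for every $p=q>1$. Combining this gadget with the transfer above covers the off‑diagonal regimes, and the point $p=q=1$ again forces me to pin the min‑$L_2$ tie‑break so the pooled value stays bounded away from $a$'s surviving low review.

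Strategy‑proofness for $q\in(1,\infty)$ is inherited from Theorem \ref{theo:previous}; for $q=1$ (in particular $p=q=1$, where the median is strategy‑proof under Objectivity) I would build a non‑hidden instance in which a reviewer's honest report triggers an across‑reviewer pooling that pulls the solution away from her opinion, while an arbitrarily small misreport dissolves that pooling and moves the solution back toward her true recommendations, contradicting Strategy‑proofness; this foreshadows the stronger discontinuity established later. I expect the main obstacle to be twofold: first, realizing the Efficiency reversal at all, since within‑reviewer monotonicity together with sorted domination tightly constrains how scores may reorder recommendations and forces the use of incomparable ($d\ge 2$) score vectors and a cross‑paper pooling; and second, securing uniformity over the whole square $[1,\infty)^2$, where the non‑strictly‑convex boundary $q=1$ makes the ERM minimizer non‑unique and demands careful control of the min‑$L_2$ tie‑breaking rule in each case.
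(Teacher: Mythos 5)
Your overall strategy is sound and two pieces are genuinely useful: the reduction that transfers the failures already recorded in Theorem \ref{theo:previous} to the non-hidden setting (a shortcut the paper does not use), and the Consensus counterexample, which is correct for all $p,q$ and close in spirit to the paper's own ($d=1$, a cross-reviewer monotonicity conflict forcing the ERM to pool one reviewer's recommendation for $a$ away from the consensus value). The gaps are in the Efficiency and Strategy-proofness parts. For Efficiency, your two-reviewer gadget demonstrably fails at $p=q=1$: the ERM tie set for the pooled pair is $\{u=v=w,\ w\in[0,9]\}$, the min-$L_2$ tie-break drives $w$ to $0$, and the left-median of two values is their minimum, so $s_a=\min(0,1)=0=\min(8,0)=s_b$ and no violation occurs. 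No amount of "pinning the tie-break" rescues a two-reviewer instance, since with two reviewers the $L(1,1)$ aggregate of each paper is the smaller of its two modified recommendations. The paper avoids this by using three reviewers and, more importantly, by not seeking a strict reversal at all: it takes two papers with \emph{identical} recommendation multisets (so they dominate each other and Efficiency forces $s_a=s_b$) together with an asymmetric score-vector collision ($x_{1a}=x_{2a}=x_{1b}$, everything else incomparable) that pools three of the six reviews into one value and yields $s_a\neq s_b$; this works uniformly in $(p,q)$ and sidesteps the ordering analysis you anticipate. Separately, your coverage claim is too quick: Theorem \ref{theo:previous} gives Efficiency failure only for $p\in(1,\infty)$, $q=1$, and your gadget is verified only on the diagonal $p=q>1$, so the regimes $p=1,\,q>1$ and $p\neq q$ with $q>1$ are covered by neither.

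For Strategy-proofness at $q=1$ you give only an intention, and the mechanism you describe (an arbitrarily small misreport that "dissolves the pooling") acts on score vectors, whereas Axiom \ref{ax.strategy-proof} quantifies over misreported \emph{recommendations}; pooling is induced by comparability of the reported score vectors and cannot be removed by an arbitrarily small change of a recommendation. The paper's fix is much simpler and reuses its Consensus instance: reviewer 2 misreports $y'_{2a}$ slightly upward (for $p=1$) or $y'_{2b}$ upward (for $p>1$) so as to shift the pooled value and pull $s_b$ toward her true $y_{2b}$ while leaving $s_a$ unchanged. You should either adopt such a recommendation-only manipulation or explicitly argue that score misreporting is within the scope of the axiom, which the paper reserves for the separate discontinuity result (Theorem \ref{thm-strat-with-scores}).
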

\begin{figure}
\centering
\begin{tikzpicture}  
        \begin{axis}[
        xmin=0, xmax=5,      
        ymin=0,ymax=3.2, 
        ylabel={recommendations},
    xlabel={scores},
    width = 0.35\textwidth,
    height = 0.2\textwidth, 
    legend pos=north west,]]
        
\addplot[
    only marks,
    color=blue,
    mark=*,
    mark size=1.4pt,
    ] coordinates {(1,1)(2,2)};
\addplot[
    only marks,
    color=black,
    mark=*,
    mark size=1.4pt,
    ] coordinates {(3,1)(4,2)};
\node[above,lightgray] at (1,1) {$y_{1a}$};
\node[above,gray] at (2,2) {$y_{1b}$};
\node[above,lightgray] at (3,1) {$y_{2a}$};
\node[above,gray] at (4,2) {$y_{2b}$};
    \end{axis}
\end{tikzpicture}
\caption{Counter-example for the proof of Theorem \ref{prop.consistency0}.}
\label{fig_counter_41}
\end{figure}
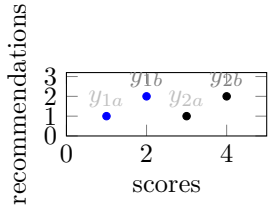
\begin{proof}
We prove that {\sc Consensus} fails by exhibiting a counterexample in which score vectors consist of a single criterion. The reviews satisfy $x_{1a}<x_{1b}<x_{2a}<x_{2b}$ and $y_{1a}=y_{2a}<y_{1b}=y_{2b}$, as in Figure \ref{fig_counter_41}. Clearly, the ERM step of the method satisfies 
$\hat{y}_{1b} =  \hat{y}_{2a} <  \hat{y}_{2b} = y_{2b}$,
even if $p=1$, since ties are broken by minimizing the $L_2$ norm. Hence $s_b<y_{2b}$ and {\sc Consensus} is not satisfied. 

In this same situation  {\sc Strategy-proofness} is violated. Indeed, for $p=1$ (when solution is left-median), by misreporting $y'_{2a}$ slightly larger than $y_{2a}$, she gets $s'_a=s_a$ and $s'_b > s_b$, so that the solution to misreported data is closer to the actual recommendations by reviewer 2. For $p>1$, by misreporting $y'_{2b}>y_{2b}$ the solution for $b$ gets moved up,  $s'_b > s_b$, making it closer to $y_{2b}$, while keeping $s_a'=s_a$.

Finally, we sketch the proof that {\sc Efficiency} fails. The following argument is carried out 
for $L(2,2)$-aggregation but it can easily be generalized to other $p,q$. We consider two papers $a,b$ reviewed by three reviewers. Suppose that $x_{1a}=x_{2a}=x_{1b}$ and assume that all the other score vectors are incomparable. Now, as a concrete example consider the following values $y_{1a}=y_{1b}=1$ and $y_{2a}=y_{2b}=4$ and $y_{3a}=y_{3b}=6$. Both papers dominate each other. The solutions obtained via $L(2,2)$-aggregation are as follows: 
\[s_a=((1+1+4)/3 + 4 + 6)/3 \ > \ s_b=(2+2+6)/3.\]
This happens because the aggregation method forces us to first aggregate $y_{1a}$, $y_{2a}$, and $y_{1b}$ into one value $\hat{y}_{1a}=\hat{y}_{1b}=\hat{y}_{2a}$, before carrying out the second step and aggregating modified recommendations into solutions. 
\end{proof}

\subsection{Axioms considering score vectors}

The previous result shows that the axiomatic behavior of the non-hidden scores setting can only be meaningfully studied in 
a scenario in which axioms also consider score vectors. 
This is what we do in this section; in fact, by modifying the axioms in a natural way 
we will see that {\sc Consensus} and {\sc Efficiency} can be recovered. Interestingly, this is not the case for {\sc Consistency}. More striking is the situation for {\sc Strategy-Proofness}, since its failure comes from the interaction between recommendations for \emph{different papers} enforced by the monotonicity restriction of the ERM step of the aggregation method. Surprisingly, as we will see below, when score vectors are taken into account, this failure becomes even more dramatic---a reviewer can now change the solution with an \emph{arbitrarily small} modification to her reported score vectors. 

\paragraph{\underline{Consensus and Efficiency with score vectors}.}
For {\sc Consensus}, it is natural to assume that the reviewers agree on both recommendations and score vectors.

\begin{axiom}[{\sc Consensus with score vectors}] 
If for a paper $a\in \mathcal{P}$ all the reviewers agree in their recommendations  $y_{ia}=y$ and in 
their score vectors $\bar{x}_{ia}=\bar{x}_a$, then $s_a=y$. 
\end{axiom}
We now easily get:
\begin{proposition}\label{prop.consensus1}
The $L(p,q)$-aggregation method, with $p,q\in[1,\infty)$, satisfies {\sc Consensus with score vectors}.
\end{proposition}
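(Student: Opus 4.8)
The plan is to follow the two stages of the method separately: I would show first that the ERM step returns $\hat y_{ia}=y$ for every reviewer $i$, and then that the aggregation step returns $s_a=y$. The single ingredient that makes this work is that, unlike in the general non-hidden setting, here every reviewer reviews paper $a$ with the \emph{same} pair $(\bar x_a,y)$; combined with the standing assumption that each reviewer reviews every paper using a monotonic $h_i$, this pins down the value any minimizer must take at $\bar x_a$.

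For the ERM step I would first record the key consistency observation: for any reviewer $j$ and any paper $c$, comparability of the score vectors forces the corresponding order on recommendations. Indeed, $\bar x_{jc}\le \bar x_a=\bar x_{ja}$ together with monotonicity of $h_j$ gives $y_{jc}\le y_{ja}=y$, and symmetrically $\bar x_{jc}\ge \bar x_a$ gives $y_{jc}\ge y$. Thus every data point whose score vector lies below $\bar x_a$ has recommendation at most $y$, and every one whose score vector lies above $\bar x_a$ has recommendation at least $y$. With this in hand, I would take any loss-minimizing monotonic $h$, set $v=h(\bar x_a)$, and, assuming $v>y$, clip $h$ from above at level $y$ on the down-set of $\bar x_a$: set $h^-(\bar x)=\min(h(\bar x),y)$ whenever $\bar x\le \bar x_a$ and $h^-(\bar x)=h(\bar x)$ otherwise. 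A short case analysis on comparable pairs shows $h^-$ is still monotonic (the only delicate case, in which $\bar u\le\bar w\le\bar x_a$ yet $\bar u\not\le\bar x_a$, is vacuous), and by the consistency observation every modified value moves toward its target, so no summand of the $L(p,q)$ loss increases while the paper-$a$ term of each reviewer drops from $|y-v|^p$ to $0$. Since $t\mapsto t^{q/p}$ is strictly increasing, this strictly decreases the total loss, contradicting minimality; the symmetric clip from below handles $v<y$. Hence every minimizer satisfies $h(\bar x_a)=y$, so in particular the tie-broken solution $\hat h$ does, giving $\hat y_{ia}=y$ for all $i$.

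It then remains to run the aggregation step on the modified recommendations $\hat y_{ia}=y$. Here I would simply note that $s_a$ enters the loss only through the additive term $|y-s_a|^p$ inside every reviewer's inner $L_p$ sum, and that $s_a=y$ zeroes all of these terms at once, independently of the values chosen for the other papers; since the outer $q$-aggregation is increasing in each inner sum and $|y-s_a|^p$ is uniquely minimized at $s_a=y$ for every $p\ge 1$, the optimal solution has $s_a=y$, as required. The main obstacle is the ERM step, and specifically checking that the clipping operation is simultaneously feasible (monotonicity-preserving) and loss-non-increasing; both facts rest entirely on converting score-vector comparability into recommendation order, which is possible precisely because the consensus hypothesis makes $(\bar x_a,y)$ a point of every reviewer's own monotone data.
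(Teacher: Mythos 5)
Your proof is correct and takes essentially the same route as the paper's: both hinge on the observation that each reviewer's own monotonicity, applied at the common data point $(\bar{x}_a,y)$, forces every comparable data point's recommendation onto the correct side of $y$, so that assigning the value $y$ at $\bar{x}_a$ creates no monotonicity conflict, and both then reduce the aggregation step to consensus under the Objectivity Assumption. Your explicit clipping argument simply makes rigorous the step the paper states informally (``the minimizer equals $y_a$ unless it must be moved to satisfy the monotonicity constraint''), which is a welcome tightening but not a different idea.
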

\begin{proof}
Suppose that for some paper $a$, some $x_a,\ y_a$ and every reviewer $i$, $x_{ia}=x_{a}$ and $y_{ia}=y_a$. Typically, minimizing the 
loss function will give us an aggregated value $\hat{y}_{ia}=y_a$ unless it is moved to allow for satisfying the monotonicity constraint. This happens only when there exist $j$ and $b$ such that either  $x_{jb} < x_a$ with $\hat{h}(x_{jb})> y_a$, or $x_{jb} > x_a$ with $\hat{h}(x_{jb})< y_a$. By monotonicity of reviewer $j$, in the first case we have $y_{jb}\leq y_{ja}=y_a$, and in the second $y_{jb}\geq y_{ja}=y_a$, so none of the mentioned conflicts of monotonicity can occur, and thus $\hat{y}_{ia}=y_a$.
Finally, we have to aggregate the values $\hat{y}_{1a},\ldots,\hat{y}_{na}$ to obtain a solution. This corresponds to aggregation under the Objectivity Assumption, and since the solution satisfies consensus in that case we have $s_a=y_a$.
\end{proof} 



To adapt efficiency to the non-hidden setting, we first need to extend the domination relation between papers to consider score vectors as well. We say that {\em paper $a$ dominates paper $b$ under score vectors}, if there is a permutation $\pi : \mathcal{R} \to \mathcal{R}$ on the set of reviewers such that $\bar{x}_{ia} \geq \bar x_{\pi(i)b}$ and $y_{ia} \geq y_{\pi(i)b}$, for each reviewer $i \in \mathcal{R}$. Thus, in the absence of score vectors, this notion coincides with the notion of dominance defined in Section \ref{sec:framework}. Then {\em $a$ strictly dominates $b$ under score vectors} 
if at least one of the dominations in the definition is strict.

\begin{axiom}[{\sc Efficiency with score vectors}] If paper $a$ dominates paper $b$ under score vectors, then $s_a \geq s_b$.  
\end{axiom}

The next statement follows easily from the monotonicity of solutions. It shows that efficiency with score vectors holds for  the same values of $p$ and $q$ as in the hidden scores setting.

\begin{proposition}\label{prop.efficiency1}
The solution based on the $L(p,q)$-aggregation method, for $p = q = 1$ and $p,q\in (1,\infty)$, satisfies {\sc Efficiency with score vectors}.
\end{proposition}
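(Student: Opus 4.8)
The plan is to reduce {\sc Efficiency with score vectors} in the non-hidden setting to the {\sc Efficiency} property already established for the hidden-scores setting (Theorem~\ref{theo:previous}), using the monotonicity of the function $\hat h$ returned by the ERM step as the bridge. The starting observation is that the ERM step produces a \emph{single} monotonic function $\hat h$, so the modified recommendations $\hat y_{ia} = \hat h(\bar x_{ia})$ are globally monotone with respect to the score vectors across all reviewer–paper pairs at once: whenever $\bar x_{ia} \geq \bar x_{jb}$ we automatically have $\hat y_{ia} = \hat h(\bar x_{ia}) \geq \hat h(\bar x_{jb}) = \hat y_{jb}$. This is the ``monotonicity of solutions'' alluded to in the text.

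Next I would transfer the domination hypothesis through $\hat h$. Suppose paper $a$ dominates paper $b$ under score vectors, witnessed by a permutation $\pi$ with $\bar x_{ia} \geq \bar x_{\pi(i)b}$ (and $y_{ia} \geq y_{\pi(i)b}$) for every reviewer $i$. Applying the previous observation with the \emph{same} permutation $\pi$ yields $\hat y_{ia} \geq \hat y_{\pi(i)b}$ for all $i$. In other words, after the ERM step the modified-recommendation profile $(\hat y_{ia})_i$ of paper $a$ dominates the profile $(\hat y_{ib})_i$ of paper $b$ in the original, recommendation-only sense of Section~\ref{sec:framework}. It is worth noting that only the score-vector part of the domination hypothesis is used here; the inequalities on the $y_{ia}$ are not needed, since $\hat h$ already encodes the relevant ordering.

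Finally I would invoke the hidden-scores result on the Aggregation step. That step acts on the modified recommendations $\hat y_{ia}$ alone and ignores score vectors, so it is exactly an instance of the hidden-scores $L(p,q)$ method applied to the data $(\hat y_{ia})$: formally, assign to the papers any pairwise-incomparable score vectors, so that no monotonicity constraint binds and the hidden-scores ERM coincides with the unconstrained minimization over $(s_a)$ carried out in the Aggregation step (with the same $L_2$-norm tie-breaking). Since the modified profiles dominate, {\sc Efficiency} in the hidden-scores setting forces $s_a \geq s_b$. The parameter range is exactly what makes this legitimate: by Theorem~\ref{theo:previous}, hidden-scores {\sc Efficiency} holds for $p=q=1$ and fails only in the case $q=1$ with $p>1$, hence it holds throughout $p,q\in(1,\infty)$, matching the statement.

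The only genuinely delicate point is this last reduction—confirming that the Aggregation step really is a hidden-scores instance, i.e.\ that passing to incomparable score vectors (dropping the monotonicity constraints) does not disturb the implication ``domination of the input profiles $\Rightarrow s_a\geq s_b$''. Everything else is immediate from the monotonicity of $\hat h$, which is why the conclusion ``follows easily.'' For the sub-case $p=q$ one can sidestep Theorem~\ref{theo:previous} altogether and cite Theorem~\ref{the-consistency} instead, since the Aggregation loss then separates per paper and the per-paper minimizer is monotone under domination; but routing through the hidden-scores {\sc Efficiency} result handles the general case $p\neq q$ in $(1,\infty)$ uniformly.
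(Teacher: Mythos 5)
Your proposal is correct and follows essentially the same route as the paper's (very terse) proof: use the monotonicity of the single ERM function $\hat h$ to transfer score-vector domination into domination of the modified recommendation profiles, then apply the hidden-scores {\sc Efficiency} result to the Aggregation step, which ignores score vectors. Your write-up merely makes explicit the steps the paper compresses into three sentences (the permutation bookkeeping, the identification of the Aggregation step with an unconstrained hidden-scores instance, and the parameter range coming from Theorem~\ref{theo:previous}).
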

\begin{proof}
Without loss of generality we assume that the solution satisfies efficiency under the Objectivity Assumption. Fix two papers $a,b$ and assume for each reviewer $i$ we have $x_{ia}\geq x_{ib}$. By monotonicity, we have for each reviewer $\hat{y}_{ia}\geq\hat{y}_{ib}$. It follows that $s_a\geq s_b$.
\end{proof}

\paragraph{\underline{Consistency with score vectors}.}
We now turn to study consistency in the context of the non-hidden scores setting. 

\begin{axiom}[{\sc Consistency with score vectors}] 
If paper $a$ strictly dominates paper $b$ under score vectors, then $s_a > s_b$.  
\end{axiom}

Quite surprisingly, the following shows that {\sc Consistency with score vectors} is never satisfied by the $L(p,q)$ solution, regardless of $p$ and $q$. The result holds even if we require the reviewers to be strictly monotonic or the domination between papers to be strict for every reviewer, or both. 

\begin{thm}\label{thm.consistency1}
For any $p\in(1,\infty)$ and $q\in(1,\infty)$, there are papers $a,b$ such that $a$ strictly dominates  $b$ under score vectors and yet the $L(p,q)$ aggregated solution satisfies $s_a=s_b$.
\end{thm}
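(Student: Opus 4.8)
The plan is to construct a small explicit counterexample in which paper $a$ strictly dominates paper $b$ under score vectors, yet the monotonicity constraints imposed during the ERM step force the modified recommendations of both papers to collapse onto a common value, thereby yielding $s_a = s_b$. The intuition comes directly from the efficiency counterexample in the proof of Theorem \ref{prop.consistency0}: the coupling introduced by shared or ordered score vectors means that the ERM step can ``flatten'' genuinely distinct papers into the same aggregated recommendation profile. I would exploit exactly this flattening, but now arrange the score vectors so that $a$ strictly dominates $b$ while the recommendations still get tied together.

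First I would work with two papers $a,b$, a small number of reviewers, and scores along a single criterion $d=1$ (this suffices and keeps the monotonicity relation a simple total-order condition). I would choose the reported recommendations $y_{ia}, y_{ib}$ so that, ignoring the monotonicity constraint, each reviewer's preferred fit is already monotone-violating in a way that forces ties: concretely, I would set up a configuration where some $y_{ia}$ paired with a \emph{lower} score than some $y_{jb}$ nevertheless exceeds $y_{jb}$, so that the unconstrained minimizer is infeasible and the constrained ERM minimizer must equate the two offending values. The key design goal is that after the ERM step one has $\hat{y}_{ia} = \hat{y}_{ib}$ for every reviewer (or more precisely, that the multiset of modified recommendations for $a$ equals that for $b$), since then the Aggregation step — which depends only on the $\hat{y}$ values — must return $s_a = s_b$. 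Simultaneously I must guarantee strict domination under score vectors, i.e. arrange $\bar{x}_{ia} \geq \bar{x}_{ib}$ (with at least one strict) and $y_{ia} \geq y_{ib}$ (with at least one strict) for a suitable matching $\pi$.

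The main step is then to verify that the constrained $L(p,q)$ minimizer actually produces the desired ties for \emph{all} $p,q \in (1,\infty)$, not just one illustrative choice. For this I would use the first-order optimality characterization: since $p,q>1$ the objective is strictly convex and differentiable in the $\hat{y}$ variables, so the unique minimizer over the monotone cone is characterized by KKT conditions, where the active monotonicity constraints carry nonnegative multipliers. I would argue that, by symmetry of the constructed data under swapping $a \leftrightarrow b$ within the relevant reviewer pairs, the optimal solution must respect that symmetry and hence assign equal modified values across the two papers; the strict convexity guarantees uniqueness, so no asymmetric minimizer can sneak in. The tie-breaking by minimal $L_2$ norm is harmless here because the primary minimizer is already unique for $p,q>1$.

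The hard part will be ensuring the argument is genuinely uniform in both $p$ and $q$: the separability trick used for $p=q$ in Theorem \ref{the-consistency} is unavailable when $p \neq q$, so I cannot reduce to per-paper medians and must reason about the coupled optimization directly. I expect the cleanest route is to design the example with enough symmetry that the optimal $\hat{y}$ profile is pinned down by equality constraints alone (the monotone chain forcing a block of values to be equal), making the precise values of $p$ and $q$ irrelevant to the \emph{tie} even though they affect the common value itself. I would double-check that strictness of domination — and the optional strengthenings mentioned in the statement (strictly monotone reviewers, domination strict for every reviewer) — can be accommodated by perturbing the scores within the monotone-forcing region without breaking the collapse, which is the delicate balance the construction must strike.
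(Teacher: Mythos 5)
Your plan is essentially the paper's proof: the paper's counterexample is exactly the configuration you describe --- two papers, two reviewers, one criterion, with $x_{1b}=x_{2a}$ so that the ERM step forces $\hat{y}_{1b}=\hat{y}_{2a}$, and recommendations $(y_{1b},y_{1a},y_{2b},y_{2a})=(1,2,2,3)$ whose symmetry pins all four modified recommendations at $2$, whence $s_a=s_b$ even though $a$ strictly dominates $b$ under score vectors. The only thing missing from your write-up is the explicit instance; the flattening mechanism, the symmetry/uniqueness argument making the tie uniform in $p,q\in(1,\infty)$, and the reduction of the final step to consensus/efficiency in the hidden-scores setting all match the paper.
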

\begin{proof}
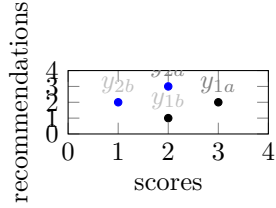
\begin{figure}
\centering
\begin{tikzpicture}  
        \begin{axis}[
        xmin=0, xmax=4,      
        ymin=0,ymax=4, 
        ylabel={recommendations},
    xlabel={scores},
    width = 0.35\textwidth,
    height = 0.2\textwidth, 
    legend pos=north west,]]
        
\addplot[
    only marks,
    color=blue,
    mark=*,
    mark size=1.4pt,
    ] coordinates {(1,2)(2,3)};
\addplot[
    only marks,
    color=black,
    mark=*,
    mark size=1.4pt,
    ] coordinates {(2,1)(3,2)};
\node[above,lightgray] at (1,2) {$y_{2b}$};
\node[above,gray] at (2,3) {$y_{2a}$};
\node[above,lightgray] at (2,1) {$y_{1b}$};
\node[above,gray] at (3,2) {$y_{1a}$};
    \end{axis}
\end{tikzpicture}
\caption{Counter-example from the proof of Theorem \ref{thm.consistency1}.}
\label{fig_proof_44}
\end{figure}
The counterexample consists of two papers $a,b$ and two reviewers. The number of criteria is $1$. Set the following scores: $x_{2b}<x_{2a}=x_{1b}<x_{1a}$ with recommendations being $y_{2b}=2=x_{1a}$, $y_{1b}=1$ and $y_{2a}=3$ (see Figure \ref{fig_proof_44}). After the first step of aggregation we get $\hat{y}_{1b}=\hat{y}_{2a}$ (because $x_{1b}=x_{2a}$). In fact, $\hat{y}_{1b}=2$. Also, since there are no additional constraints, we have $\hat{y}_{2b}=y_{2b}=y_{2a}=\hat{y}_{2a}$. Hence, the modified recommendations for both papers weakly dominate each other (being the same). We conclude that $s_a=s_b$, since {\sc Efficiency} holds under the Objectivity Assumption.
\end{proof}

\paragraph{\underline{Continuity}.} 
Finally, we show that strategy-proofness fails dramatically as a malicious reviewer can obtain a huge gain by an arbitrarily small misreporting of his beliefs. 

\begin{thm}\label{thm-strat-with-scores}
The $L(p,q)$-aggregation method is not continuous as a function of the score vectors. 
\end{thm}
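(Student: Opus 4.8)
The plan is to exhibit a single one-parameter family of score configurations along which the output of the method jumps, by perturbing the counterexample already built for Theorem~\ref{thm.consistency1}. The conceptual reason behind the discontinuity is structural: in the ERM step the monotonicity constraint linking two data points is an \emph{equality} ($\hat h(\bar x_{ia}) = \hat h(\bar x_{jb})$) exactly when their score vectors coincide, but degenerates into a mere \emph{inequality} as soon as the vectors are separated, and this inequality may fail to be active at the optimum. Thus an arbitrarily small displacement of a score vector can delete an active equality constraint, and this is what I would turn into a jump.

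Concretely, I would start from the one-criterion, two-paper, two-reviewer data of Theorem~\ref{thm.consistency1}, where $x_{2b} < x_{2a} = x_{1b} < x_{1a}$ and $y_{2b}=2$, $y_{1b}=1$, $y_{2a}=3$, $y_{1a}=2$, and embed it into the family obtained by setting $x_{1b} = x_{2a}-\varepsilon$ for $\varepsilon \ge 0$, leaving all recommendations and the remaining scores fixed. The key step is to compute the ERM output in the two regimes (for $p=q$ this is ordinary $L_p$ isotonic regression, solvable by the pool-adjacent-violators procedure). At $\varepsilon = 0$ the forced equality $\hat y_{1b}=\hat y_{2a}$ pools the recommendations $1$ and $3$ into their common symmetric minimizer $2$, and no further pooling is triggered, so $\hat y_{ia}\equiv 2$ and hence $s_a=s_b=2$---consistent with Theorem~\ref{thm.consistency1}. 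For every $\varepsilon \in (0,\, x_{2a}-x_{2b})$ the scores are strictly ordered as $x_{2b} < x_{1b} < x_{2a} < x_{1a}$, so the relevant inequality $\hat y_{1b}\le \hat y_{2a}$ is no longer forced; instead each recommendation pools \emph{within its own paper} (the violations are $y_{2b}=2>1=y_{1b}$ and $y_{2a}=3>2=y_{1a}$), giving $\hat y_{2b}=\hat y_{1b}=1.5$ and $\hat y_{2a}=\hat y_{1a}=2.5$, and therefore $s_b=1.5$, $s_a=2.5$.

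I would then conclude by noting that this latter value is \emph{independent of} $\varepsilon$ on the whole interval, since the isotonic regression depends only on the order of the scores, not their exact positions. Hence the solution is the constant $(s_a,s_b)=(2.5,\,1.5)$ for all small $\varepsilon>0$, while it equals $(2,2)$ at $\varepsilon=0$; as $\varepsilon \to 0^+$ the output does not converge to its value at the limit point, so the map from score vectors to solutions is discontinuous. It is worth remarking that the perturbation is one-sided: moving $x_{1b}$ just \emph{above} $x_{2a}$ instead pools $y_{2a}=3$ with $y_{1b}=1$ and keeps the solution at $(2,2)$, so the method is continuous from one side only, which makes the discontinuity especially transparent.

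The main obstacle, and the part I expect to require genuine care, is verifying that for $\varepsilon>0$ the constraint $\hat y_{1b}\le\hat y_{2a}$ is truly slack at the optimum (equivalently, that the pool-adjacent-violators blocks are exactly the two claimed pairs), which I would settle by a direct optimality check, and extending the computation beyond the separable case $p=q$. For $p\neq q$ the ERM couples the two reviewers, so one cannot simply invoke isotonic regression; however, the aggregation step remains harmless in this example because the modified recommendations coincide within each paper, forcing $s_a=2.5$ and $s_b=1.5$ regardless of $q$ once the ERM values are established. The remaining work is thus to confirm that the active/inactive status of the equality constraint at $\varepsilon=0$ versus $\varepsilon>0$ still produces distinct ERM values for general $p,q\in(1,\infty)$.
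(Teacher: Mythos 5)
Your construction is sound and is in the same spirit as the paper's: both proofs exhibit a concrete low-dimensional instance in which an arbitrarily small displacement of a single score changes which monotonicity constraints of the ERM step are active, producing a jump in the solution of fixed size. The paper uses three reviewers and places the jump at an \emph{order reversal} (moving $x_{1b}$ from just below to just above $x_{2a}=x_{3a}$ turns the constraint $\hat h(\bar x_{1b})\le\hat h(\bar x_{2a})$, which conflicts with the raw recommendations and forces pooling, into the opposite inequality, which is vacuous). You instead reuse the two-reviewer data of Theorem~\ref{thm.consistency1} and place the jump at the \emph{tie}: at $\varepsilon=0$ the coincidence $\bar x_{1b}=\bar x_{2a}$ forces an equality constraint that pools $1$ and $3$ into $2$ and drags everything to the all-$2$ solution, while for $\varepsilon>0$ the constraint relaxes to a slack inequality and the PAV blocks become $\{2,1\}$ and $\{3,2\}$, giving $(s_a,s_b)=(2.5,1.5)$ versus $(2,2)$. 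Your computations for $p=q>1$ are correct (the block values $1.5\le 2.5$ are consistent, and $4\cdot 0.5^p<2$ for $p>1$ confirms the pooled solution beats the all-$2$ one), and your instance is arguably cleaner and more economical than the paper's.

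The one genuine gap is the case $p\neq q$, which you flag but do not close; since the theorem is stated for the $L(p,q)$ method in general, this should be finished. It can be done within your example. For $\varepsilon=0$ the instance is invariant under the reflection $y\mapsto 4-y$ combined with swapping the two reviewers and reversing the score order; for $p,q\in(1,\infty)$ the loss is strictly convex, so the unique minimizer is fixed by this symmetry, which forces the pooled value to be $2$ and hence the all-$2$ solution. For $\varepsilon>0$ the candidate $(\hat y_{2b},\hat y_{1b},\hat y_{2a},\hat y_{1a})=(1.5,1.5,2.5,2.5)$ has loss $2\cdot(2\cdot 0.5^p)^{q/p}=2^{\,1+q/p-q}<2$ whenever $p>1$, so the unique optimum is not the all-$2$ vector; a short KKT argument (the constraint $\hat y_{2b}\le\hat y_{1b}$ must be active since the loss is strictly decreasing in $\hat y_{2b}$ below $2$ and strictly increasing in $\hat y_{1b}$ above $1$, and symmetrically for the other pair) shows the optimum has the form $(u,u,v,v)$ with $u<2<v$, whence {\sc Consensus} in the aggregation step gives $s_b=u<2<v=s_a$. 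Finally, note that neither you nor the paper treats $p=1$ rigorously: there the ERM minimizer is non-unique and the $L_2$ tie-breaking rule must be invoked (in your instance it selects $(1,1,2,2)$ for $\varepsilon>0$, so the jump persists), which is worth a sentence if you want the statement in full generality.
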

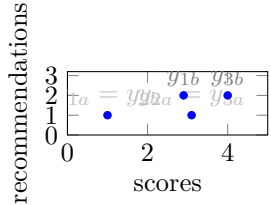
\begin{figure}
  \centering
\begin{tikzpicture}
        \begin{axis}[
        xmin=0, xmax=5,      
        ymin=0,ymax=3.2, 
        ylabel={recommendations},
    xlabel={scores},
    width = 0.35\textwidth,
    height = 0.2\textwidth, 
    legend pos=north west,]]
        
\addplot[
    only marks,
    color=blue,
    mark=*,
    mark size=1.4pt,
    ] coordinates {(1,1)(2.9,2)(3.1,1)(4,2)};

\node[above,lightgray] at (1,1) {$y_{1a}=y_{2b}$};
\node[above,lightgray] at (3.1,1) {$y_{2a}=y_{3a}$};
\node[above,gray] at (2.9,2) {$y_{1b}$};
\node[above,gray] at (4,2) {$y_{3b}$};
    \end{axis}
\end{tikzpicture}
\caption{An example showing influence of a malicious reviewer under $L(1,1)$-aggregation. 
}
\label{fig-strategyproof-with-vectors}
\end{figure}
\begin{proof}
Consider three reviewers and two papers with $x_{1a}=x_{2b}<x_{1b}<x_{2a}=x_{3a}<x_{3b}$ and $y_{1a}=y_{2b}=y_{2a}=y_{3a}<y_{1b}=y_{3b}$ (see Figure \ref{fig-strategyproof-with-vectors}). By monotonicity requirement, $\hat y_{1b}$ is forced to be strictly smaller than $y_{1b}$. In particular, $s_b$ will be strictly smaller than $y_{1b}$. However, the reviewer $1$ can maliciously misreport $x_{1b}$ so that $x_{1b}>x_{2a}=x_{3a}$. This would cause the monotonicity requirement to be trivially satisfied, allowing the modified solution $s'_b$ to become closer to $y_{1b}$, while having $s'_a=y_{1a}$, which shows that \ {\sc Strategy-proofness} is violated. Now, we may assume 
$x_{1b}=x_{2a}-\epsilon$ for an arbitrarily small $\epsilon$. At the same time, the gain obtained by the malicious reviewer, i.e, the difference $|s_b-s'_b|$, does not depend on $\epsilon$. Hence the solution is not continuous. 
\end{proof}

\section{Conclusions}

While the method proposed by \N does not require the Objectivity Assumption to hold, our results provide 
strong evidence that when this assumption is withdrawn the setting loses its good axiomatic behavior. These results are not only of theoretical interest as the method in question was reportedly used in practice by the organisers of a major AI conference \cite{survey}.

A particularly disturbing aspect of the setting without the Objectivity Assumption is its discontinuity, as stated in Theorem \ref{thm-strat-with-scores}. This means that a very small random perturbation in the criteria scores obtained by a paper $a$ can cause significant change to the ratings of an unrelated paper $b$. One of the main motivations for the discussed framework is to counteract the apparent arbitrariness in peer review. But discontinuity of the method introduces just a new type of arbitrariness to the decisions informed by its outputs.

Our results suggest a deep theoretical difficulty underlying criteria aggregation in peer review. However, they are not definite evidence for the impossibility of a satisfying solution. Indeed, one can note that failure of the axioms for the method of \N appears even in a toy scenario with just one criterion---which means they do not come specifically from dealing with mapping several criteria into one score, but rather are intrinsic to this particular method of aggregation.

This leads to the question of whether there is a meaningful modification of the discussed setting that remedies (at least partially) this bad behavior. A natural idea, already suggested by \N, is to limit the space of hypotheses by considering, e.g., only linear mappings. In such a case, the aggregation could involve the fitting of a linear mapping for each reviewer, followed by aggregation of these into one learned mapping. We briefly discuss such a framework in Appendix A and observe that it allows us to recover continuity and consistency. However, previous experiments (vide e.g. \cite{N}) suggest that reviewers' mappings are non-linear. Hence, such simplification might have undesirable consequences of its own and the linear framework should be considered simply as a proof-of-concept and motivation for further studies.

Finally, future research could involve rethinking of the problem to address the issue of miscalibration \cite{wang2018your}. Miscalibration bias comes from the apparent discrepancy between scales used by different reviewers when giving ratings. For example, Siegelmann \cite{siegelman1991assassins} reported that some reviewers of radiology papers were consistently giving lower-than-average scores to papers ('assassins'), while a different group of 'zealots' exhibited the opposite behavior. Again, this introduces arbitrariness of the outcomes of the peer review process. This issue will become especially important once we drop the assumption that all reviewers review all papers. A possible direction for future work would then include proposing a setting that addresses both the commensuration and miscalibration biases at the same time.

\section*{Appendix A}
We discuss a modification of the setting here, based on the natural 
idea of using linear functions for explaining the behavior of each reviewer (as suggested in \cite{N}). 
Such linear functions are then aggregated into a single linear function that is used to compute the final score of each paper. This approach brings the following two immediate benefits over the non-hidden scores setting: (1) 
The obtained setting can easily be seen to be continuous as a function of the score vectors; and 
(2) in addition to providing us with an aggregated score vector for each paper, it also provides us with a function that explains the behavior of each reviewer, and that can potentially be applied to previously unseen data. 
    


\subsection{A linear functional setting}

We propose a simple aggregation scheme consisting of the following three steps. 

\begin{enumerate}[wide, labelwidth=!, labelindent=0pt]

 \item {\em  Linearization of the reviewers.} We fit a monotone linear function 
 $h_i:[0,10]^d\to\mathbb{R}$ separately for each reviewer $i\in\mathcal{R}$; e.g., by taking 
$h_i = \arg\min \sum_{a\in\mathcal{P}}|h(\bar{x}_{ia})-y_{ia}|^p$, 
where $h(\bar{x})$ ranges over all linear functions 
$c[{d+1}]+\sum_{j=1}^d c[j]\bar{x}_{ia}[j]$ with 
$(c[1],\dots,c[d+1]) \in (\mathbb{R}_{\geq 0})^{d+1}$, i.e., all coefficients in $\bar c$ are non-negative thus ensuring monotonicity of 
the obtained linear function. 
We denote by $\bar{c}_i$ the vector of coefficients that defines the linear function 
$h_i$. 
 
 \item {\em Aggregation of linear functions.} We now aggregate the $h_i$'s into a single linear function $\hat{h}$ defined by coefficients $\hat{c}[j]$ obtained by minimizing:
\[ \label{eq:lpq-linear}
L(p,q) \ := \ \bigg[\,\sum_{i \in \mathcal{R}} \, \bigg( 
\, \sum_{j \in [1,d+1]} \,  |c_i[j] - \hat{c}[j]|^{{p}} \bigg)^{\nicefrac{q}{p}} 
\, \bigg]^{\nicefrac{1}{q}}\, 
\]
Ties are broken by minimization of the $L(2)$ norm.
 
 \item {\em Aggregation of the score vectors.} Finally, we aggregate score vectors for each paper $a\in\mathcal{P}$ into an aggregated score vector $\bar{u}_a$. The solution $s_a$ for paper $a$ is $s_a=\hat{h}(\bar{u}_a)$. The aggregated score vectors $\bar{u}_a$ are obtained by minimization of global $L(p,q)$ loss functions applied to score vectors:
 \[
\bar{u}_a \ := \arg\min \ \bigg[\,\sum_{i \in \mathcal{R}} \, \bigg( 
\, \sum_{j\in[1,d]} \,  |\bar{x}_{ai}[j] - \bar{u}_a[j]|^{{p}} \bigg)^{\nicefrac{q}{p}} 
\, \bigg]^{\nicefrac{1}{q}}\, 
 \]
Ties are again broken by minimization of the $L(2)$ norm.
\end{enumerate} 
The {\em solution of the linear $L(p,q)$-aggregation method} is $(u_a,s_a)$. 

\subsection{Axioms and the linear functional setting} 

We now briefly discuss the properties of this method.  First, it is clear that for $p,q \in (1,\infty)$ the method is continuous, since in each one of its steps one is minimizing an unrestricted loss function that varies continuously with the data and which has a unique solution. 
To obtain consistency, we need to make an extra hypothesis about reviewers: that each criteria of evaluation has at least some importance to at least one reviewer. We say that a criteria $j\in[1,d]$ is {\em ignored} if for all reviewers, the recommendations they assign are independent of the value of the score vector for this criteria.   

\begin{proposition}
    Suppose that there are no ignored criteria. Then the linear $L(p,q)$-aggregation method satisfies 
    {\sc Consistency with score vectors} for every $p,q \in (1,\infty)$. 
\end{proposition}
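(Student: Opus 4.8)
The plan is to exploit the fact that in the linear setting the solution factorizes as $s_a = \hat{h}(\bar{u}_a) = \hat{c}[d+1] + \sum_{j=1}^d \hat{c}[j]\,\bar{u}_a[j]$, where $\hat h$ is the single aggregate monotone linear map (so $\hat{c}[j]\ge 0$) and $\bar{u}_a$ is the aggregated score vector of paper $a$. Since $\hat h$ is common to both papers, the comparison reduces to
\[ s_a - s_b \ = \ \sum_{j=1}^d \hat{c}[j]\,\bigl(\bar{u}_a[j]-\bar{u}_b[j]\bigr). \]
Thus it suffices to show (i) that each aggregate coefficient is strictly positive, $\hat{c}[j]>0$ for all $j\in[1,d]$, and (ii) that the aggregated score vectors inherit the domination, i.e. $\bar{u}_a\ge \bar{u}_b$ coordinatewise with a strict inequality in at least one coordinate. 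Given (i) and (ii), every summand is nonnegative and at least one is strictly positive, forcing $s_a>s_b$.

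For step (ii) I would first prove that the score-vector aggregation $\bar{x}\mapsto \bar{u}$ is coordinatewise monotone and, in fact, strictly monotone in each input coordinate. When $p=q$ the loss separates across coordinates into independent one-dimensional $p$-mean problems, each of which is a strictly increasing function of its inputs, so monotonicity is immediate; the genuine work is the coupled case $p\ne q$, where the exponent $q/p$ links the coordinates. There I would argue from strict convexity of the (unconstrained) loss together with a comparison argument: starting from the domination $\bar{x}_{ia}\ge\bar{x}_{\pi(i)b}$ for all $i$, one shows along a monotone homotopy of the data that the minimizer cannot move against the shift. Strictness is then obtained because raising a single input coordinate strictly perturbs the first-order optimality condition in that coordinate, and strict convexity converts this into a strict change of the minimizer. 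This is the step I expect to be the main technical obstacle.

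For step (i) I would use the \emph{no ignored criteria} hypothesis. For each criterion $j$ there is a reviewer whose recommendations genuinely depend on $j$; the first task is to argue that the monotone linear fit of that reviewer then assigns a strictly positive coefficient $c_i[j]>0$ (this is the delicate link between the semantic notion of ``ignored'' and the fitted coefficients, and may require care if $j$ is collinear with other criteria in the data). Once some $c_i[j]>0$ while all the remaining $c_{i'}[j]\ge 0$, I would show the aggregate satisfies $\hat{c}[j]>0$ by a first-order argument applied to the loss $\sum_i\bigl(\sum_j|c_i[j]-\hat c[j]|^p\bigr)^{q/p}$: at the point $\hat c[j]=0$ its partial derivative in the $\hat c[j]$ direction is strictly negative, since reviewers with $c_i[j]>0$ each contribute a term $-q\,A_i^{q/p-1}c_i[j]^{p-1}<0$ (writing $A_i=\sum_j|c_i[j]-\hat c[j]|^p$), while reviewers with $c_i[j]=0$ contribute $0$ because $p>1$. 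Hence $\hat c[j]=0$ cannot be optimal, and an analogous sign argument rules out $\hat c[j]<0$, giving $\hat c[j]>0$.

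Finally, I would assemble the pieces and address one subtlety. Because $s_a$ depends on the recommendations only through the \emph{common} map $\hat h$, the strict inequality needed in (ii) must be witnessed by a strict inequality in some score-vector coordinate; if the strict domination were realized only in the recommendations $y$ (with all score vectors equal under $\pi$), the aggregation would return $\bar{u}_a=\bar{u}_b$ and hence $s_a=s_b$. I would therefore either invoke that strict domination \emph{under score vectors} is intended to be witnessed on the score side, or restrict attention to that case; the productive content of the argument is exactly the chain (i)--(ii) above, with the coupled monotonicity of the score-vector aggregation as the principal difficulty.
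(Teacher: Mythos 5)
The paper states this proposition without proof, so your argument has to stand on its own. Your decomposition $s_a-s_b=\sum_j \hat c[j](\bar u_a[j]-\bar u_b[j])$ is the natural (essentially forced) route, and for $p=q$ it goes through cleanly: both the coefficient aggregation and the score-vector aggregation then separate coordinatewise into one-dimensional $p$-mean problems, which are strictly monotone by exactly the argument of Lemma~1 in the paper, and your first-order argument for $\hat c[j]>0$ is sound. Your two caveats are also well taken and expose real imprecision in the statement: if the strict domination is witnessed only in the recommendations (all score vectors equal under $\pi$), the method returns $\bar u_a=\bar u_b$ and hence $s_a=s_b$, so strictness must be read on the score side; and the passage from the semantic ``no ignored criteria'' to a strictly positive fitted coefficient $c_i[j]>0$ is a gap you flag but do not close.

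The genuine gap is your step (ii) for $p\neq q$, and the proposed fix (strict convexity plus a monotone homotopy) will not work, because the claim it is meant to establish is false: the minimizer of $\sum_i\|\bar x_i-\bar u\|_p^q$ is \emph{not} coordinatewise monotone in the data when $p\neq q$. The stationarity condition makes $\bar u$ a weighted mean with weights depending on the distances $\|\bar x_i-\bar u\|_p$ (for $p=2$ the weight of point $i$ is $\|\bar x_i-\bar u\|_2^{\,q-2}$), so raising one coordinate of one data point changes that point's weight and can drag $\bar u$ the wrong way in a \emph{different} coordinate. Concretely, take $p=2$, $q=4$ and points $(0,0)$, $(2,0)$, $(1,-10)$: by symmetry the minimizer is $(1,v)$ with $v<0$; raising the second coordinate of $(0,0)$ to $\delta>0$ increases that point's distance to the minimizer, hence its weight, and pulls the minimizer left, so $\bar u[1]$ strictly decreases. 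Strict convexity of the loss gives uniqueness of the minimizer but says nothing about its monotone dependence on the data, so no homotopy argument of the kind you sketch can succeed. Worse, feeding such a configuration into papers $a$ and $b$ and choosing reviewers whose recommendations depend strongly on criterion~1 and only weakly (but not trivially) on criterion~2 makes $\hat c[2]$ arbitrarily small while keeping all criteria non-ignored, which yields $s_a<s_b$ despite $a$ strictly dominating $b$ under score vectors---so for $p\neq q$ the proposition itself appears to fail, not just your proof of it. You should either restrict the claim to $p=q$ (where your argument is complete modulo the two caveats above) or find an entirely different mechanism for the coupled case.
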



Given that the method is consistent, by a reasoning similar to that of Proposition \ref{prop.consistency-notsproof}, we cannot expect it to also satisfy strategy-proofness. 

In conclusion, the linear aggregation method achieves a
 good stability and axiomatic behavior by restricting the class of functions used in explaining the behavior of reviewers. There is a price to be paid, though, which is that the learned hypothesis no longer can adjust in an exact manner to what the reviewers demand. This implies the loss of consensus.   

\begin{proposition}  
{\sc Consensus with score vectors} is not satisfied by the linear $L(p,q)$-aggregation method for any $p,q \in [1,\infty)$.  
\end{proposition}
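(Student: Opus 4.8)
The plan is to refute the axiom with an explicit counterexample exploiting the one structural weakness of the method: a linear function cannot interpolate monotone data that fails to be collinear, so the consensus point of a paper can be dragged off the fitted line by the reviewer's \emph{other} papers. I would work with a single reviewer and a single criterion ($d=1$), and arrange the consensus paper $a$ to sit at a data point that no reasonable monotone line through the remaining data can pass through. Because a single reviewer makes the later two steps of the method trivial, the failure is already visible after the linearization step.

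Concretely, I would take one reviewer reviewing three papers with score--recommendation pairs $(0,0)$, $(5,10)$ and $(10,10)$, the middle pair being the consensus paper $a$, so $\bar x_a = 5$ and $y_a = 10$. This data is monotone and is realized by a genuine monotone reviewer mapping (e.g.\ $h_i(x)=\min(2x,10)$), hence admissible. First I would record that, with a single reviewer, the aggregation of linear functions returns $\hat h = h_1$ (the unique coefficient minimizer is $\hat c = c_1$) and the aggregation of score vectors returns $\bar u_a = \bar x_a = 5$; thus $s_a = h_1(5)$ and the parameter $q$ is irrelevant to the construction.

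The heart of the argument is to show that the $L_p$-optimal monotone linear fit $h_1$ from the first step satisfies $h_1(5)\neq 10$ for every $p\in[1,\infty)$, which I would establish by a loss comparison rather than by solving the optimization. The admissible line $h(x)=x$ (coefficients $c_1=1$, $c_2=0$) has loss $0 + 5^p + 0 = 5^p$. On the other hand, every admissible line through the consensus point $(5,10)$ satisfies $c_2 = 10-5c_1$ with $c_1\in[0,2]$, and its loss equals $(10-5c_1)^p + (5c_1)^p$, which over $[0,2]$ is minimized at $c_1=1$ with value $2\cdot 5^p$. Since $5^p < 2\cdot 5^p$, every minimizer of the first step has loss strictly below that of any line passing through $(5,10)$; hence no minimizer passes through it, so $h_1(5)\neq 10 = y_a$. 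Therefore $s_a = h_1(5)\neq y_a$, and {\sc Consensus with score vectors} fails for all $p,q\in[1,\infty)$.

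The main obstacle I anticipate is making the claim $h_1(5)\neq 10$ uniform in $p$, in particular at $p=1$, where the first-step minimizer need not be unique and one cannot invoke strict convexity to pin down a single fit. The loss-comparison argument is designed precisely to sidestep this: by showing the optimal loss is \emph{strictly} smaller than the loss of any interpolating line, it excludes all minimizers from passing through the consensus point at once, non-uniqueness notwithstanding. The only routine checks are that the constraints $c_1,c_2\ge 0$ force $c_1\in[0,2]$ on lines through $(5,10)$ and that $h(x)=x$ is admissible, both of which are immediate.
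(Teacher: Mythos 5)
Your proof is correct. The paper itself states this proposition without proof, offering only the informal remark that ``the learned hypothesis no longer can adjust in an exact manner to what the reviewers demand''; your argument is a clean formalization of exactly that idea. The counterexample is valid: the data $(0,0),(5,10),(10,10)$ is realized by the monotone map $\min(2x,10)$; any admissible line through $(5,10)$ has $c_2=10-5c_1$ with $c_1\in[0,2]$ and loss $(10-5c_1)^p+(5c_1)^p\geq 2\cdot 5^p$ by convexity of $t\mapsto t^p$, while $h(x)=x$ achieves $5^p$, so no step-one minimizer interpolates the consensus point regardless of tie-breaking; and with a single reviewer steps two and three are forced, so $s_a=h_1(5)\neq 10$ for all $p,q\in[1,\infty)$. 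The loss-comparison device is the right way to handle non-uniqueness at $p=1$. The only cosmetic point worth addressing is the reliance on $n=1$: a reader might object that a one-reviewer instance makes ``consensus'' vacuous. This is harmless and trivially repaired by duplicating the reviewer --- two identical reviewers fit identical coefficient vectors, step two then returns that common vector (zero loss, unique), and $\bar u_a=5$ by symmetry --- so you may want to state the example with two clones to preempt the objection.
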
 

\paragraph{Acknowledgements.} Barcel\'o and Steifer have been funded by ANID - Millennium Science Initiative Program -
Code ICN17002. Barcel\'o and Rojas have been funded by the National Center for Artificial
Intelligence CENIA FB210017, Basal ANID.

\bibliography{main}

\end{document}